\documentclass{article}

\usepackage[preprint]{preprint}

\usepackage[utf8]{inputenc}
\usepackage[T1]{fontenc}
\usepackage{amsmath,amssymb,amsfonts,amsthm}
\usepackage{algorithm}
\usepackage{algorithmic}
\usepackage{booktabs}
\usepackage{graphicx}
\usepackage[hidelinks]{hyperref}
\usepackage{xcolor}
\usepackage{multirow}
\usepackage{makecell}
\usepackage{subcaption}
\usepackage{float}
\usepackage{tikz}
\usetikzlibrary{positioning, arrows.meta, fit, calc, decorations.pathreplacing}

\newcommand{\grace}{\textsc{GRACE}}
\newcommand{\RR}{\mathbb{R}}
\newcommand{\EE}{\mathbb{E}}
\newcommand{\PP}{\mathbb{P}}
\newcommand{\bx}{\mathbf{x}}

\newcommand{\bW}{\mathbf{W}}

\DeclareMathOperator{\sigmoid}{sigmoid}

\DeclareMathOperator{\ReLU}{ReLU}

\newtheorem{proposition}{Proposition}
\newtheorem{corollary}{Corollary}
\newtheorem{definition}{Definition}
\newtheorem*{remark}{Remark}

\title{\grace{}: Gated Refinement for Accurate Causal Edge Discovery in High-Dimensional Time Series}

\author{
  Mohammad Fesanghary \\
  \texttt{mfesanghary1@bloomberg.net}
  \And
  Abhinav Havaldar \\
  \texttt{ahavaldar@gmail.com}
}

\date{}

\begin{document}
\maketitle

\begin{abstract}
From climate teleconnections to gene regulation, modern time-series datasets encompass tens or hundreds of interacting variables, making causal discovery increasingly challenging.
Constraint-based methods offer statistical rigor but their nonlinear CI tests are infeasible at scale, while score-based alternatives avoid CI testing but require arbitrary thresholds to binarize continuous edge scores.
We propose \grace{} (\textbf{G}ated \textbf{R}efinement for \textbf{A}ccurate \textbf{C}ausal \textbf{E}dge discovery), which refines constraint-based discovery using Hard Concrete gates with $L_0$ regularization---each candidate edge has an independent gate whose values concentrate near 0 or 1, yielding a clean bimodal separation that makes the binary decision robust---unlike the narrow, overlapping score distributions produced by $L_1$ and attention-based methods.
A fast linear CI skeleton provides high-recall candidates; a single gated model then prunes false positives by learning which edges genuinely improve prediction, with automatic regularization adapted to problem dimensions and skeleton density.
Systematic experiments on synthetic benchmarks---spanning diverse graph topologies (scale-free, Erd\H{o}s--R\'enyi, small-world) and dimensionalities up to $d=100$---show that \grace{} substantially improves F1 over its base CI method while maintaining high precision, and outperforms attention-based and score-based alternatives.
\grace{} matches or exceeds expensive nonlinear CI tests at a fraction of the cost ($75\times$ faster).
On a real-world river flow dataset---where rainfall confounders, variable propagation lags, and distributional shifts violate standard assumptions---a temporal bootstrap variant of \grace{} recovers 9 of 11 causal edges along the Elbe River, for example, with only 1 false positive ($F_1 = 0.86$, AUROC${} = 0.99$), reducing the skeleton's 106 false positives by 99\%.
\end{abstract}

\section{Introduction}
\label{sec:intro}

Inferring causal relationships from observational time series is a central challenge in climate science, neuroscience, finance, gene regulatory network reconstruction, industrial root cause analysis, and many other fields~\citep{peters2017elements}.
Given a multivariate time series $\bx_t = (x_t^1, \ldots, x_t^d)$ with $T$ observations, the goal is to recover the \emph{lagged causal graph} $G$, where $G[c, \ell, e] = 1$ indicates that variable $x^c$ at lag $\ell$ is a direct cause of variable $x^e$ at time $t$.

Two dominant families of methods have emerged for time-series causal discovery, each with complementary strengths and limitations.
\emph{Score-based methods}---Neural Granger Causality~\citep{tank2021neural}, NAVAR~\citep{bussmann2021neural}, DYNOTEARS~\citep{pamfil2020dynotears}, CUTS+~\citep{cheng2023cuts_plus}---take a global optimization view, searching for the graph that best fits the data under a scoring criterion.
They can capture nonlinear dependencies and scale to moderate dimensionality, but enforce sparsity through continuous penalties ($L_1$ or Gumbel-Softmax), requiring post-hoc weight thresholding to obtain a binary graph---and model misspecification or poor scoring assumptions can silently degrade the recovered graph.
Attention-based methods like TCDF~\citep{nauta2019causal} similarly optimize a predictive objective but use attention weights as causal proxies; however, softmax normalization creates artificial competition among candidate parents, and attention thresholds must be dataset-tuned.
\emph{Constraint-based methods}---PCMCI/PCMCI+~\citep{runge2019detecting, runge2020discovering}, CDNOTS~\citep{fesanghary2025cdnots}, SyPI+~\citep{fesanghary2025sypi}, LPCMCI~\citep{gerhardus2020high}---build causal graphs from conditional independence (CI) tests, offering interpretable, statistically principled inference without requiring a global scoring function.
However, these tests are sensitive to sample size and conditioning-set dimension: nonlinear CI tests such as CMI~\citep{runge2018conditional}, KCIT~\citep{zhang2011kernel} ($O(n^3)$), and RCoT~\citep{strobl2019approximate} ($O(d_f^2 n)$ for $d_f$ Fourier features) remain costly at high $d$, forcing practitioners to rely on fast linear tests that miss nonlinear dependencies.
For broader surveys, see \citet{assaad2022survey} and \citet{gong2023causal}.

In this work, we propose \grace{} (Gated Refinement for Accurate Causal Edge discovery), a framework for improving constraint-based causal discovery at high dimensionality.
\grace{} operates in two stages: first, a high-recall constraint-based method such as CDNOTS~\citep{fesanghary2025cdnots} or PCMCI~\citep{runge2019detecting} produces a candidate skeleton; then, a gated neural model refines this skeleton by assigning each candidate edge an independent Hard Concrete gate~\citep{louizos2018learning, maddison2017concrete} trained with $L_0$ regularization, whose gate values concentrate near 0 or 1, yielding a robust binary graph at a natural 0.5 threshold.
The gated model learns which skeleton edges genuinely improve prediction, pruning false positives while preserving true causal links---narrowing the gap with expensive nonlinear CI tests at a fraction of the computational cost.
Although our experiments focus on lagged effects ($\tau \geq 1$), the framework natively supports contemporaneous (lag-0) edges: enabling lag-0 gates allows the same $L_0$ mechanism to select among instantaneous candidates provided by the skeleton.

Our contributions are:
\begin{enumerate}
\item The \grace{} framework: A gated refinement stage that converts a high-recall skeleton into a high-precision causal graph via Hard Concrete gates with $L_0$ regularization. The gating mechanism is independent of the skeleton source: we demonstrate it with both CDNOTS and PCMCI skeletons.
\item An empirically derived regularization rule (Eq.~\ref{eq:auto_lambda}) that adapts $\lambda$ to problem dimensions ($d$, $T$) and skeleton density, paired with $L_0$ normalization that decouples regularization from batch size.
\item Real-world evaluation on the Elbe River network~\citep{stein2025causalrivers}, where a temporal bootstrap variant---applying \grace{} to random temporal windows and retaining edges that appear consistently---handles non-stationarity, hidden confounders, and variable lags without domain-specific heuristics.
\end{enumerate}


\section{Methodology}
\label{sec:method}

\subsection{Problem Formulation}
\label{sec:problem}

Consider a $d$-variate stationary time series $\{\bx_t\}_{t=1}^T$ with $\bx_t \in \RR^d$.
We assume a structural causal model with maximum lag $L$:
\begin{equation}
  x_t^e = f_e\!\left(\{x_{t-\ell}^c : G[c, \ell, e] = 1\}\right) + \epsilon_t^e, \quad \epsilon_t^e \sim \mathcal{N}(0, \sigma_e^2),
  \label{eq:scm}
\end{equation}
where $G \in \{0,1\}^{d \times (L+1) \times d}$ is the lagged causal graph and $f_e$ is the (possibly nonlinear) causal mechanism for effect variable $e$.
The goal is to recover $G$ from observations $\{\bx_t\}_{t=1}^T$.

The gated refinement stage assumes:
(i) \emph{Causal sufficiency}: there are no unobserved common causes;
(ii) \emph{Finite lag}: all causal effects occur within lags $0, \ldots, L$;
(iii) \emph{Additive structure}: effects combine additively across parents, though each individual effect $f_{c,\ell \to e}$ can be nonlinear.
Additional assumptions (e.g., stationarity) are inherited from the skeleton discovery algorithm; when using CDNOTS, nonstationarity is handled natively.

\begin{figure}[t]
\centering
\begin{tikzpicture}[
    >=Stealth,
    box/.style={draw, rounded corners=2pt, minimum height=0.55cm, minimum width=1.1cm,
                font=\scriptsize, inner sep=3pt, fill=#1},
    box/.default=white,
    stagebox/.style={draw, rounded corners=2pt, minimum height=0.7cm, minimum width=1.6cm,
                font=\scriptsize, inner sep=4pt, fill=#1},
    stagebox/.default=white,
    gate/.style={draw, circle, minimum size=0.45cm, font=\scriptsize, inner sep=1pt, fill=orange!20},
    mult/.style={draw, circle, minimum size=0.35cm, font=\scriptsize, inner sep=0pt},
    arr/.style={->, thick, color=black!70},
]

\node[stagebox=gray!12, minimum width=4.2cm] (ci) at (2.3, 1.8) {Constraint-based method (e.g., CDNOTS, PCMCI)};
\node[font=\scriptsize, right=0.3cm of ci] (skel_out) {skeleton $S$};
\draw[arr] (ci) -- (skel_out);

\node[font=\tiny, color=black!50, right=0.2cm of skel_out] (mask_label) {mask};
\draw[arr, color=black!50] (skel_out) -- (mask_label);
\draw[->, thick, color=black!50] (mask_label.south) -- ++(0, -0.55);

\node[font=\tiny\bfseries, color=black!50, anchor=east] at (-0.9, 1.8) {Stage 1};
\node[font=\tiny\bfseries, color=black!50, anchor=east] at (-0.9, 0.35) {Stage 2};


\def\rowA{0}       
\def\rowB{-0.9}    
\def\rowD{-2.6}    

\def\colIn{0}
\def\colEnc{1.5}
\def\colGate{2.8}
\def\colMult{3.7}
\def\colSum{4.7}
\def\colDec{6.0}
\def\colOut{7.5}

\node[box=blue!8] (x1) at (\colIn, \rowA) {$x^1_{t-1}$};
\node[box=blue!8] (x2) at (\colIn, \rowB) {$x^2_{t-1}$};
\node[font=\scriptsize] at (\colIn, -1.55) {$\vdots$};
\node[box=blue!8] (xd) at (\colIn, \rowD) {$x^d_{t-L}$};

\node[box=cyan!15, minimum width=0.9cm] (e1) at (\colEnc, \rowA) {$\bW_{1,1}$};
\node[box=cyan!15, minimum width=0.9cm] (e2) at (\colEnc, \rowB) {$\bW_{2,1}$};
\node[box=cyan!15, minimum width=0.9cm] (ed) at (\colEnc, \rowD) {$\bW_{d,L}$};
\node[font=\scriptsize] at (\colEnc, -1.55) {$\vdots$};

\node[gate] (g1) at (\colGate, \rowA+0.5) {$z_1$};
\node[gate] (g2) at (\colGate, \rowB+0.5) {$z_2$};
\node[gate] (gd) at (\colGate, \rowD+0.5) {$z_D$};

\node[mult] (m1) at (\colMult, \rowA) {$\times$};
\node[mult] (m2) at (\colMult, \rowB) {$\times$};
\node[mult] (md) at (\colMult, \rowD) {$\times$};
\node[font=\scriptsize] at (\colMult, -1.55) {$\vdots$};

\node[draw, circle, minimum size=0.45cm, font=\scriptsize, inner sep=0pt,
      fill=yellow!15] (sum) at (\colSum, \rowB) {$\Sigma$};

\node[box=red!10, minimum width=1.0cm] (dec) at (\colDec, \rowB) {$\text{MLP}_e$};

\node[font=\scriptsize] (out) at (\colOut, \rowB) {$\mu_e, \sigma_e$};

\draw[arr] (x1) -- (e1);
\draw[arr] (x2) -- (e2);
\draw[arr] (xd) -- (ed);

\draw[arr] (e1) -- (m1);
\draw[arr] (e2) -- (m2);
\draw[arr] (ed) -- (md);

\draw[arr] (g1.south east) -- (m1.north west);
\draw[arr] (g2.south east) -- (m2.north west);
\draw[arr] (gd.south east) -- (md.north west);

\draw[arr] (m1.east) -- (sum);
\draw[arr] (m2.east) -- (sum);
\draw[arr] (md.east) -- (sum);

\draw[arr] (sum) -- (dec);
\draw[arr] (dec) -- (out);

\node[above=0.15cm of x1, font=\tiny\bfseries, color=black!70] {Inputs};
\node[above=0.15cm of e1, font=\tiny\bfseries, color=black!70] {Encoder};
\node[above=0.02cm of g1, font=\tiny\bfseries, color=black!70] {Gates};
\node[above=0.15cm of dec, font=\tiny\bfseries, color=black!70] {Decoder};

\draw[decorate, decoration={brace, amplitude=3pt, mirror}, thick, color=black!40]
    ($(e1.north west)+(-0.05,0.02)$) -- ($(ed.south west)+(-0.05,-0.02)$)
    node[midway, left=4pt, font=\tiny, color=black!50, align=center] {shared};

\node[below=0.1cm of dec, font=\tiny, color=black!50] {per effect $e$};

\node[font=\tiny, color=orange!70!black] (hc) at (\colMult, \rowD-0.5)
    {Hard Concrete {\color{red!70!black}$+ \; L_0$ penalty}};

\draw[dashed, rounded corners=4pt, color=black!55]
    (-0.8, -3.35) rectangle (9.5, 1.1);

\end{tikzpicture}
\caption{\grace{} two-stage pipeline. \textbf{Stage~1}: A constraint-based method (e.g., CDNOTS or PCMCI) identifies candidate edges. \textbf{Stage~2}: A gated model refines the skeleton---only edges in $S$ have learnable Hard Concrete gates; all others are masked to zero. The $L_0$ penalty drives gates toward exact zero or one, producing a bimodal distribution that makes the $0.5$ decision boundary robust.}
\label{fig:architecture}
\end{figure}

\subsection{Gated Causal Model}
\label{sec:architecture}

We construct a predictive model for each effect variable $e$, where candidate causal inputs are individually gated.
The architecture consists of three components (Figure~\ref{fig:architecture}): shared encoder, per-effect gated aggregation, and per-effect decoder.

\paragraph{Shared encoder.}
A linear projection maps each lagged input to a hidden representation shared across all effect variables:
\begin{equation}
  h_{c,\ell} = \bW_{c,\ell}\, x_{t-\ell}^c + b_{c,\ell} \in \RR^H,
  \label{eq:encoder}
\end{equation}
where $H$ is the encoder hidden dimension, $\bW_{c,\ell} \in \RR^{H \times 1}$, and $b_{c,\ell} \in \RR^H$.
Sharing the encoder across effects reduces parameters from $O(d^2 L H)$ to $O(d L H)$; effect-specificity is provided by the gates.

\paragraph{Gated aggregation.}
For each effect $e$, a Hard Concrete gate $z_{c,\ell,e} \in \{0, 1\}$ independently controls each candidate parent:
\begin{equation}
  \hat{\bx}_e = \sum_{c,\ell} z_{c,\ell,e} \cdot h_{c,\ell}.
  \label{eq:aggregate}
\end{equation}
Gates are parameterized by learnable log-odds $\log\alpha_{c,\ell,e}$ (Section~\ref{sec:hardconcrete}).

\paragraph{Per-effect decoders.}
Each effect variable has an independent two-layer MLP decoder with SiLU activations that maps the aggregated representation to Gaussian parameters:
\begin{equation}
  (\mu_e, \log\sigma_e) = \text{MLP}_e(\hat{\bx}_e), \quad \text{MLP}_e : \RR^H \to \RR^2.
  \label{eq:decoder}
\end{equation}
Using per-effect decoders (rather than a shared decoder) prevents conflation of distinct causal mechanisms---e.g., a quadratic effect $x^c_{t-1} \to x^e_t$ and a linear effect $x^c_{t-1} \to x^{e'}_t$ can learn separate response functions.

\subsection{Hard Concrete Gates}
\label{sec:hardconcrete}

We adopt the Hard Concrete distribution~\citep{louizos2018learning}, which stretches the binary concrete (Gumbel-Softmax)~\citep{maddison2017concrete, jang2017categorical} distribution to produce exact zeros and ones.

\paragraph{Training (stochastic).}
During training, gates are sampled via the reparameterization trick:
\begin{align}
  u &\sim \text{Uniform}(\epsilon, 1-\epsilon), \label{eq:hc_u} \\
  s &= \sigmoid\!\left(\frac{\log u - \log(1-u) + \log\alpha}{\tau}\right), \label{eq:hc_s} \\
  \bar{z} &= s \cdot (\zeta - \gamma) + \gamma, \label{eq:hc_stretch} \\
  z &= \min(\max(\bar{z}, 0), 1), \label{eq:hc_clamp}
\end{align}
where $\tau = 2/3$ is the temperature, and $\gamma = -0.1$, $\zeta = 1.1$ define the stretch interval.
The stretch beyond $[0,1]$ followed by clamping places positive probability mass on exact 0 and exact 1.

\paragraph{Evaluation (deterministic).}
At test time, we use the deterministic Hard Concrete estimator from \citet{louizos2018learning}:
\begin{equation}
  z = \min\!\left(\max\!\left(\sigmoid(\log\alpha) \cdot (\zeta - \gamma) + \gamma,\; 0\right),\; 1\right).
  \label{eq:hc_det}
\end{equation}
Crucially, the stretch-and-clamp mechanism places positive probability mass on \emph{exact} zero, so trained gates converge to either $z=0$ (edge absent) or $z \approx 1$ (edge present); Appendix~\ref{app:gate_hist} confirms this empirically with gate value histograms showing clear bimodal separation between true and false edges.
The final causal graph is obtained by thresholding deterministic gate values at $0.5$---a natural boundary that falls in the gap between the two modes, making the binary decision robust to the exact threshold choice.
Unlike the \emph{post-hoc weight thresholding} required by $L_1$ and attention-based methods---where continuous edge scores cluster in a narrow range and results are sensitive to the chosen cutoff---the bimodal gate distribution makes the $0.5$ boundary a principled default rather than an arbitrary tuning parameter.
The gate initialization ($\log\alpha={-}0.5$) and $L_0$ penalty strength ($\lambda$) remain as design choices, but these are set once before training rather than tuned after inspecting learned weights.

\paragraph{$L_0$ penalty.}
The probability that a gate is nonzero has a closed-form expression:
\begin{equation}
  \PP(z \neq 0) = \sigmoid\!\left(\log\alpha - \tau \log\!\frac{-\gamma}{\zeta}\right).
  \label{eq:l0_penalty}
\end{equation}
This enables an analytic $L_0$ regularizer without Monte Carlo estimation.

\paragraph{Initialization.}
We initialize $\log\alpha = -0.5$ for all gates, yielding a deterministic gate value of approximately $0.36$ (below the active threshold of $0.5$).
Gates start mostly closed, requiring edges to demonstrate predictive value to open.
This asymmetric initialization reduces false positives compared to a symmetric initialization near $\log\alpha = 0$.

\subsection{Loss Function}
\label{sec:loss}

The training objective combines prediction quality with sparsity:
\begin{align}
  \mathcal{L} &= \underbrace{\frac{1}{Bd}\sum_{t=1}^B \sum_{e=1}^d \text{NLL}(x_t^e \mid \mu_t^e, \sigma_t^e)}_{\text{prediction}} + \underbrace{\lambda \cdot s \sum_{c,\ell,e} \PP(z_{c,\ell,e} \neq 0)}_{\text{$L_0$ sparsity}} \notag \\
  &\quad + \underbrace{\lambda \cdot \lambda_{\text{lag}} \sum_{c,e} \ReLU\!\left(\sum_\ell \PP(z_{c,\ell,e} \neq 0) - k\right)}_{\text{lag concentration (optional)}},
  \label{eq:loss}
\end{align}
where $\text{NLL}(x \mid \mu, \sigma) = \frac{(x-\mu)^2}{2\sigma^2} + \log\sigma + \frac{1}{2}\log 2\pi$ is the Gaussian negative log-likelihood, $B$ is the batch size, and $d$ is the number of variables.

\paragraph{$L_0$ normalization.}
The scaling factor $s = 1/\lceil N/B \rceil$ normalizes the $L_0$ penalty by the number of gradient steps per epoch, ensuring that $\lambda$ values transfer reliably across dataset sizes (Section~\ref{sec:practical}).

\subsection{Practical Considerations}
\label{sec:practical}

\paragraph{Batch size and $L_0$ normalization.}
With $N$ samples and batch size $B$, there are $\lceil N/B \rceil$ gradient steps per epoch.
Without normalization, the $L_0$ penalty is applied per step, while the NLL is averaged per sample, creating an implicit coupling: $\text{effective } \lambda \propto \lambda \cdot N/B$.
Our $L_0$ normalization (factor $s = 1/\lceil N/B \rceil$ in Eq.~\ref{eq:loss}) decouples regularization from batch size.
The batch size is set automatically as $B = \max(32, \min(N/8, 256))$.

\paragraph{Lag concentration (optional).}
By default, each gate is penalized independently by the $L_0$ term and the skeleton determines the lag structure---if the CI skeleton identifies edges at multiple lags for a pair, all corresponding gates are learned without additional constraint.
When domain knowledge or exploratory analysis (e.g., partial cross-correlation) suggests that causal effects concentrate at a small number of lags, an optional lag concentration penalty (third term in Eq.~\ref{eq:loss}) can be enabled by setting $\lambda_{\text{lag}} > 0$.
This penalty discourages more than $k$ active lags per (cause, effect) pair via a soft hinge: $k$ lags are ``free''; the penalty is linear above $k$.
The lag term omits the per-step normalization factor $s$: it is a structural penalty on edge multiplicity, not a per-sample regularizer.
In our experiments, $\lambda_{\text{lag}} = 0$ (disabled) since the skeleton already provides the appropriate lag structure.

\paragraph{Training details.}
We use Adam~\citep{kingma2015adam} with learning rate $10^{-3}$ and weight decay $10^{-4}$.
The encoder hidden dimension is $H = 64$; each per-effect decoder is a two-layer MLP with hidden size 64 and SiLU activations.
The noise scale $\sigma_e$ is not a fixed parameter but a learned output of the decoder (via softplus), optimized jointly with all other parameters.
Training runs for up to 150 epochs with early stopping (patience 20, monitoring training loss).

\subsection{CDNOTS-Gated Refinement}
\label{sec:cdnots_gated}

Constraint-based methods like CDNOTS~\citep{fesanghary2025cdnots} with fast linear CI tests (partial correlation) achieve high recall but suffer from low precision as dimensionality increases: in our benchmarks (Section~\ref{sec:experiments}), CDNOTS achieves $95\%$ TPR but only $37\%$ precision at $d=100$/$T=1000$, yielding F1\,$=$\,0.53.
\grace{} uses the gated model from Section~\ref{sec:architecture} as a \emph{precision filter}: starting from this high-recall skeleton, it learns which edges genuinely improve prediction and prunes the rest.
\paragraph{Pipeline.}
The CDNOTS-Gated refinement operates in two stages:
\begin{enumerate}
\item \textbf{CDNOTS skeleton}: Run CDNOTS with partial correlation CI tests at significance level $\alpha = 0.05$ to obtain a binary skeleton $S \in \{0,1\}^{d \times d \times (L+1)}$.
This produces a high-recall superset of the true causal graph.
\item \textbf{Gated refinement}: Train a single gated model with the skeleton as a hard mask---only edges present in $S$ have learnable gates; all others are fixed to zero.
Gates are initialized at $\log\alpha = -0.5$ (deterministic gate value $\approx 0.36$, below the active threshold of $0.5$) for skeleton edges, so each edge must demonstrate predictive value to open.
\end{enumerate}

\paragraph{$\lambda$ selection.}
We use an empirically derived rule based on problem dimensions and skeleton density:
\begin{equation}
  \lambda = \max\!\bigl(0.007 + 0.16\,\rho_S,\;\; \tfrac{1}{d} + \tfrac{4}{T} - 0.4\,\rho_S^{0.8}\bigr),
  \label{eq:auto_lambda}
\end{equation}
where $\rho_S = |S| / (d^2(L+1) - d)$ is the fraction of candidate edges in the skeleton.
The coefficients were fitted by grid search over the oracle-best $\lambda$ on a held-out calibration set of ${\sim}1{,}000$ random Erd\H{o}s--R\'enyi graphs spanning $d \in \{10, \ldots, 100\}$, $T \in \{500, \ldots, 2000\}$, and edge densities from sparse to 0.12---deliberately disjoint from the SCP benchmarks used in Section~\ref{sec:experiments}, to avoid overfitting to the evaluation data.
The main term ($1/d + 4/T - 0.4\,\rho_S^{0.8}$) adapts to problem scale: more variables and more data allow weaker regularization, with a sublinear density correction.
The floor ($0.007 + 0.16\,\rho_S$) ensures sufficient regularization for dense skeletons.
This reduces computation to a single model-training run.

\paragraph{Why it works.}
The combination succeeds because the two components address complementary failure modes.
CDNOTS with partial correlation provides a computationally cheap skeleton with very high recall---in practice, most nonlinear causal relationships also induce some linear association that partial correlation can detect---but many false positives at high $d$.
The gated model, constrained to the skeleton, operates in a much smaller search space ($|S| \ll d^2 L$), enabling it to learn nonlinear causal mechanisms and prune edges that don't genuinely improve prediction.

A natural question is why a single $\lambda$ value works across all edges, rather than requiring per-edge tuning.
To build intuition (not a formal guarantee for the nonlinear case), Appendix~\ref{app:lambda} analyzes a simplified linear Gaussian model and shows that each edge has a \emph{critical penalty} $\lambda^*$ proportional to its NLL reduction $\Delta$---the improvement in prediction from opening that gate.
True causal edges reduce NLL by a constant $\Delta^{\text{true}} = \Theta(1)$ (determined by signal strength), while false edges can only overfit noise, giving $\Delta^{\text{false}} = O(1/T)$.
The ratio of their critical penalties is therefore $\lambda^{*\text{true}} / \lambda^{*\text{false}} = T \cdot \text{SNR}$, where $\text{SNR}$ is the per-edge signal-to-noise ratio (formally defined in Appendix~\ref{app:lambda} as the conditional SNR).
This means a \emph{separation interval} of valid $\lambda$ values exists and widens linearly with sample size, so any $\lambda$ in this interval recovers the correct graph.
The analysis does not directly derive the constants in Eq.~\ref{eq:auto_lambda}, but it explains three qualitative properties that the heuristic satisfies:
(i)~the skeleton pre-filters candidates, reducing the number of false edges that $\lambda$ must suppress;
(ii)~the gate initialization below the decision boundary creates inertia against false edges opening;
and (iii)~the dependence on $\rho_S$ is justified because denser skeletons admit more false candidates, requiring stronger regularization to keep the expected false positive count bounded.

\paragraph{Comparison with nonlinear CI tests.}
An alternative to gated refinement is using expensive nonlinear CI tests (RCoT, KCIT) within CDNOTS.
Using a nonlinear CI test (RCoT) within the skeleton generally improves F1, since it can detect dependencies that linear partial correlation misses.
However, the runtime cost is steep---in our experiments, 40--75$\times$ slower at $d \geq 50$.
Gated refinement with a fast linear skeleton narrows the F1 gap at a fraction of the cost, requiring only $O(|S| \cdot H)$ per gradient step.
We validate this empirically in Section~\ref{sec:rcot_comparison} (full results in Appendix~\ref{app:rcot}).

\section{Experiments}
\label{sec:experiments}

We evaluate \grace{} on synthetic benchmarks with known ground-truth causal graphs, focusing on high-dimensional scaling ($d \in \{20, 30, 50, 70, 100\}$) with varying sample sizes ($T \in \{300, 500, 1000, 2000\}$).

\subsection{Setup}
\label{sec:setup}

\paragraph{Data generation.}
We generate random structural causal processes (SCPs) with controlled properties---the standard benchmark class for time-series causal discovery, modeling systems common in climate science, econometrics, and neuroscience~\citep{runge2019detecting, assaad2022survey}.
Graphs are \emph{sparse} ($\sim$1.3--2.2 edges per variable), with lags drawn uniformly from $1$ to $L=5$.
Dependency coefficients are sampled uniformly from $[0.40, 0.89]$ or $[-0.89, -0.40]$ with equal probability, and functions are drawn from a mixed pool of linear and nonlinear forms: $\{x, x^2, |x|, \sin(x)\}$.
For each dimensionality $d$, we generate 10 random graphs (seeds 0--9) at $T_{\max} = 2500$ and slice to each target $T$ for evaluation, ensuring that results across sample sizes reflect the same underlying causal structure.

\paragraph{Metrics.}
We report F1 score (harmonic mean of precision and recall), along with precision and true positive rate (TPR/recall) to characterize the precision--recall tradeoff.
Structural Hamming Distance (SHD) results are provided in Appendix~\ref{app:shd}.
All results are averaged over 10 random graph seeds; we report mean $\pm$ standard deviation.

\subsection{Baselines}
\label{sec:baselines}

We compare against representative methods from each major paradigm---constraint-based (PCMCI, CDNOTS), attention-based (TCDF), and score-based (CUTS+).
CUTS+ subsumes the score-based family: it outperforms NGC, DYNOTEARS, and NAVAR on its own benchmarks~\citep{cheng2023cuts_plus}, so including all four would be redundant.
\begin{itemize}
\item \textbf{PCMCI}~\citep{runge2019detecting}: Constraint-based method with partial correlation CI test. Significance level $\alpha = 0.05$.
\item \textbf{CDNOTS}~\citep{fesanghary2025cdnots}: Nonstationary causal discovery with partial correlation.
\item \textbf{TCDF}~\citep{nauta2019causal}: Attention-based convolutional approach with permutation testing.
\item \textbf{CDNOTS-Gated} (\grace{}): Our CDNOTS-Gated refinement pipeline (Section~\ref{sec:cdnots_gated}).
\item \textbf{CUTS+}~\citep{cheng2023cuts_plus}: Score-based method using Gumbel-Softmax gates on a GRU-based message-passing network, designed explicitly for \emph{high-dimensional} causal discovery from irregular time series.
  CUTS+ is the most directly comparable score-based baseline at scale; since it outputs pairwise probabilities without lag resolution, we evaluate both methods at \emph{pair-level} F1 (lags collapsed) for a fair comparison, setting aside \grace{}'s lag-specific output. \grace{} substantially outperforms CUTS+ across all configurations (Appendix~\ref{app:cuts_plus}).
  We use their published default settings with threshold~$0.5$ and set the input window to match the true maximum lag.
\end{itemize}

\subsection{Results: F1 Across Dimensionality and Sample Size}
\label{sec:results}

\begin{table}[t]
\centering
\caption{F1 scores on sparse SCP benchmarks across dimensionality and sample size (mean $\pm$ std over 10 seeds). Best result per row in \textbf{bold}.}
\label{tab:f1_main}
\small
\begin{tabular}{l l cccc}
\toprule
$d$ & \textbf{Method} & $T=300$ & $T=500$ & $T=1000$ & $T=2000$ \\
\midrule
\multirow{4}{*}{20}
& PCMCI         & .436\tiny{$\pm$.03} & .433\tiny{$\pm$.02} & .455\tiny{$\pm$.04} & .454\tiny{$\pm$.03} \\
& CDNOTS       & .785\tiny{$\pm$.04} & .810\tiny{$\pm$.04} & .844\tiny{$\pm$.04} & .856\tiny{$\pm$.06} \\
& TCDF          & .556\tiny{$\pm$.16} & .623\tiny{$\pm$.17} & .660\tiny{$\pm$.18} & .670\tiny{$\pm$.17} \\
& \textbf{CDNOTS-Gated} & \textbf{.794}\tiny{$\pm$.05} & \textbf{.845}\tiny{$\pm$.04} & \textbf{.868}\tiny{$\pm$.04} & \textbf{.893}\tiny{$\pm$.03} \\
\midrule
\multirow{4}{*}{30}
& PCMCI         & .341\tiny{$\pm$.02} & .346\tiny{$\pm$.03} & .354\tiny{$\pm$.03} & .356\tiny{$\pm$.03} \\
& CDNOTS       & .768\tiny{$\pm$.04} & .784\tiny{$\pm$.04} & .816\tiny{$\pm$.06} & .830\tiny{$\pm$.03} \\
& TCDF          & .518\tiny{$\pm$.12} & .608\tiny{$\pm$.14} & .683\tiny{$\pm$.15} & .688\tiny{$\pm$.15} \\
& \textbf{CDNOTS-Gated} & \textbf{.804}\tiny{$\pm$.04} & \textbf{.851}\tiny{$\pm$.05} & \textbf{.893}\tiny{$\pm$.03} & \textbf{.908}\tiny{$\pm$.04} \\
\midrule
\multirow{4}{*}{50}
& PCMCI         & .221\tiny{$\pm$.02} & .228\tiny{$\pm$.02} & .236\tiny{$\pm$.02} & .241\tiny{$\pm$.02} \\
& CDNOTS       & .717\tiny{$\pm$.05} & .720\tiny{$\pm$.04} & .742\tiny{$\pm$.04} & .722\tiny{$\pm$.04} \\
& TCDF          & .431\tiny{$\pm$.08} & .600\tiny{$\pm$.10} & .691\tiny{$\pm$.09} & .724\tiny{$\pm$.08} \\
& \textbf{CDNOTS-Gated} & \textbf{.841}\tiny{$\pm$.03} & \textbf{.886}\tiny{$\pm$.03} & \textbf{.915}\tiny{$\pm$.02} & \textbf{.911}\tiny{$\pm$.03} \\
\midrule
\multirow{4}{*}{70}
& PCMCI         & .161\tiny{$\pm$.02} & .168\tiny{$\pm$.02} & .171\tiny{$\pm$.02} & .177\tiny{$\pm$.02} \\
& CDNOTS       & .651\tiny{$\pm$.04} & .665\tiny{$\pm$.03} & .666\tiny{$\pm$.05} & .648\tiny{$\pm$.05} \\
& TCDF          & .322\tiny{$\pm$.04} & .529\tiny{$\pm$.07} & .724\tiny{$\pm$.07} & .768\tiny{$\pm$.06} \\
& \textbf{CDNOTS-Gated} & \textbf{.849}\tiny{$\pm$.05} & \textbf{.896}\tiny{$\pm$.03} & \textbf{.927}\tiny{$\pm$.02} & \textbf{.929}\tiny{$\pm$.02} \\
\midrule
\multirow{4}{*}{100}
& PCMCI         & .105\tiny{$\pm$.02} & .113\tiny{$\pm$.02} & .117\tiny{$\pm$.02} & .119\tiny{$\pm$.02} \\
& CDNOTS       & .571\tiny{$\pm$.04} & .559\tiny{$\pm$.06} & .531\tiny{$\pm$.07} & .513\tiny{$\pm$.07} \\
& TCDF          & .269\tiny{$\pm$.04} & .504\tiny{$\pm$.10} & .718\tiny{$\pm$.10} & .785\tiny{$\pm$.09} \\
& \textbf{CDNOTS-Gated} & \textbf{.830}\tiny{$\pm$.06} & \textbf{.878}\tiny{$\pm$.04} & \textbf{.920}\tiny{$\pm$.02} & \textbf{.922}\tiny{$\pm$.02} \\
\bottomrule
\end{tabular}
\end{table}

\begin{figure}[t]
\centering
\includegraphics[width=\textwidth]{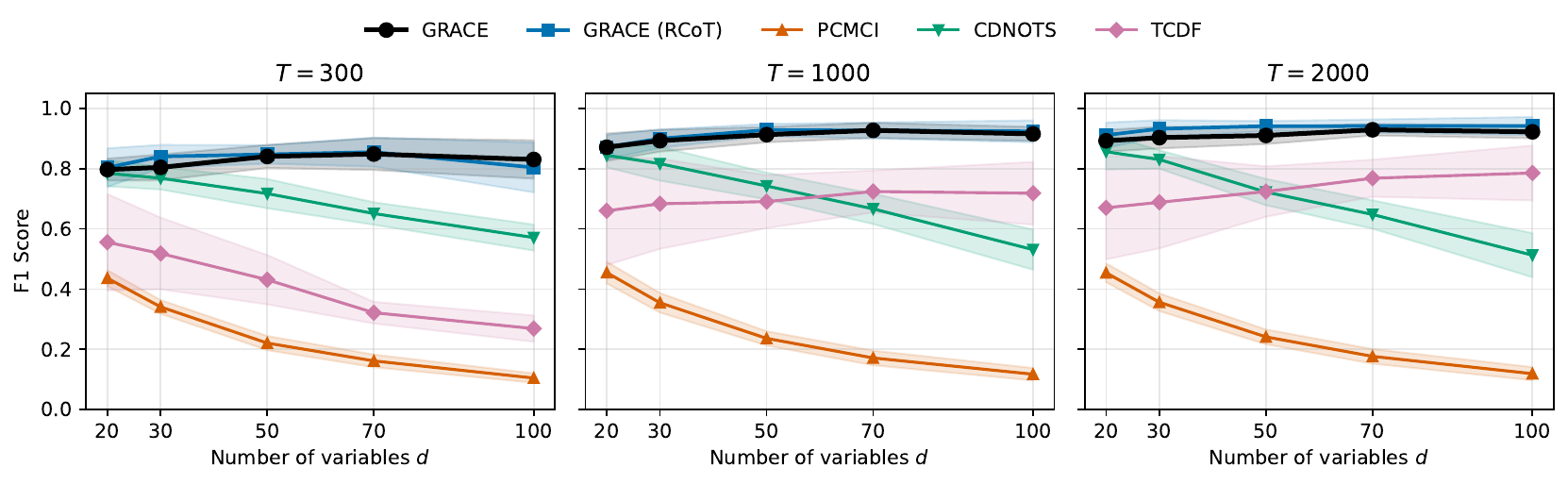}
\caption{$F_1$ score vs.\ number of variables $d$ at three sample sizes. CDNOTS-Gated (black) consistently outperforms all baselines, and the advantage grows with dimensionality.}
\label{fig:f1_vs_d}
\end{figure}

Table~\ref{tab:f1_main} and Figure~\ref{fig:f1_vs_d} present F1 scores across all methods, dimensionalities, and sample sizes.
We highlight several key findings:

\paragraph{CDNOTS-Gated dominates at high dimensionality.}
At $d=50$/$T=1000$, CDNOTS-Gated achieves F1\,$=$\,0.915, compared to 0.742 for CDNOTS and 0.691 for TCDF.
At $d=100$, the gap widens further (F1\,$=$\,0.878 vs.\ 0.559 at $T=500$), with the advantage growing monotonically in $d$.
PCMCI achieves $>97\%$ recall at all $d$ but precision collapses from 30\% ($d=20$) to 6\% ($d=100$)---a common challenge as the hypothesis space grows as $O(d^2 L)$.
TCDF achieves high precision ($>93\%$) through permutation testing but limited recall (${\sim}56\%$), plateauing around F1\,$\approx$\,0.72.
CUTS+ precision collapses at high $d$: pair-level F1 drops to 0.055 at $d=100$ (Appendix~\ref{app:cuts_plus}).

\begin{table}[t]
\centering
\caption{Precision and TPR at $T=1000$ across dimensionality (mean over 10 seeds). CDNOTS-Gated preserves CDNOTS's high recall while dramatically improving precision.}
\label{tab:prec_tpr}
\small
\begin{tabular}{l cc cc cc cc}
\toprule
& \multicolumn{2}{c}{$d=20$} & \multicolumn{2}{c}{$d=50$} & \multicolumn{2}{c}{$d=70$} & \multicolumn{2}{c}{$d=100$} \\
\cmidrule(lr){2-3} \cmidrule(lr){4-5} \cmidrule(lr){6-7} \cmidrule(lr){8-9}
\textbf{Method} & Prec & TPR & Prec & TPR & Prec & TPR & Prec & TPR \\
\midrule
PCMCI         & .298 & .973 & .135 & .978 & .094 & .984 & .062 & .982 \\
CDNOTS       & .837 & .860 & .617 & .934 & .514 & .953 & .372 & .951 \\
TCDF          & .938 & .537 & .933 & .556 & .931 & .597 & .954 & .584 \\
\textbf{CDNOTS-Gated} & \textbf{.950} & .804 & \textbf{.967} & .870 & \textbf{.977} & .883 & \textbf{.974} & .872 \\
\bottomrule
\end{tabular}
\end{table}

\paragraph{Precision--recall tradeoff.}
Table~\ref{tab:prec_tpr} reveals the mechanism: CDNOTS-Gated preserves most of CDNOTS's recall (83--88\% vs.\ 86--95\%) while boosting precision from 37--84\% to 86--97\%.
At $d=70$, precision reaches 97.5\%---nearly every predicted edge is correct.
Runtime remains practical (${\sim}4$\,min at $d=100$; Appendix~\ref{app:runtime}).

\paragraph{Nonlinear CI skeleton.}
\label{sec:rcot_comparison}
Replacing the linear skeleton with RCoT~\citep{strobl2019approximate} improves F1 slightly (0.944 vs.\ 0.922 at $d=100$/$T=2000$) but at $75\times$ the cost; at lower $T$, ParCorr-Gated matches or exceeds RCoT-Gated (Appendix~\ref{app:rcot}).

\paragraph{Ablations and generalization.}
Without a skeleton, $L_0$ suppression alone cannot recover the signal at high $d$ (F1 degrades substantially); the adaptive $\lambda$ formula consistently outperforms fixed values; lag overspecification is handled gracefully ($F_1 \geq 0.878$) via $L_0$ suppression (Appendix~\ref{app:ablation}).
\grace{} generalizes across skeleton sources: PCMCI-Gated yields $3$--$5\times$ F1 over raw PCMCI (Appendix~\ref{app:pcmci_gated}), and DYNOTEARS-Gated matches CDNOTS-Gated at $d \leq 50$ (Appendix~\ref{app:dynotears_gated}).
It also generalizes across non-additive mechanisms (Lorenz-96; Appendix~\ref{app:lorenz96}) and diverse graph topologies (scale-free, Erd\H{o}s--R\'enyi, small-world; Appendix~\ref{app:topology}).

\subsection{Real-World Evaluation: Elbe River Network}
\label{sec:causalrivers}

Synthetic benchmarks offer controlled evaluation but cannot capture the compounding assumption violations present in real-world systems.
We evaluate \grace{} on a river network dataset~\citep{stein2025causalrivers} providing water level measurements along river networks in eastern Germany, where ground truth is determined by physical flow direction (upstream $\to$ downstream).

\paragraph{Challenges.}
River data violates standard assumptions: rainfall acts as a time-varying, spatially-varying hidden confounder; causal lags vary with water level (higher flow $=$ faster propagation); seasonal floods create distributional shifts; and reservoirs disconnect natural flow.
These challenges make causal discovery on river data substantially harder than on synthetic benchmarks~\citep{stein2025causalrivers}.

\paragraph{Setup.}
We focus on the Elbe River's main branch in eastern Germany: 12 gauging stations forming a chain graph with 11 directed edges (upstream $\to$ downstream), measured at 6-hour resolution over 2019--2023 ($T=3{,}164$).
We compare PCMCI, CDNOTS, single-run \grace{}, and \grace{}-Bootstrap.
When applied to the full multi-year time series, all skeleton methods achieve perfect recall but extremely poor precision due to ubiquitous confounding---e.g., CDNOTS recovers all 11 true edges but produces 106 false positives ($F_1=0.17$).

\paragraph{Temporal bootstrap.}
We apply \grace{} independently to random 1-month windows and retain edges appearing in more than a $\tau$ fraction of runs---a temporal analogue of stability selection~\citep{meinshausen2010stability}.
Because confounders like rainfall are transient and episodic, they inflate correlations only in the windows they affect; true flow edges, by contrast, persist across all hydrological conditions.
Windowing thus separates stable causal signal from transient confounding without requiring confounder measurements.

\paragraph{Results.}
We set $\tau=0.70$, requiring an edge to appear in more than 70\% of windows to be retained---a conservative threshold that does not require ground-truth calibration and is interpretable on its own terms.
Table~\ref{tab:elbe} shows that \grace{}-Bootstrap (50 runs, $\tau=0.70$) recovers 9 of 11 edges with only 1 false positive ($F_1=0.86$, Precision\,$=$\,0.90), reducing CDNOTS's 106 FPs by 99\%.
The bootstrap AUROC reaches 0.986, showing that edge frequency provides an excellent ranking even when the binary threshold excludes borderline edges.

\begin{table}[t]
\centering
\caption{Elbe River's main branch ($d=12$, 11 true edges, $T=3{,}164$, 6h resolution)~\citep{stein2025causalrivers}. The CI skeleton achieves perfect recall but produces 106 false positives. \grace{}-Bootstrap applies \grace{} to 50 random 1-month windows and thresholds on edge frequency ($\tau=0.70$).}
\label{tab:elbe}
\small
\begin{tabular}{l ccccc}
\toprule
\textbf{Method} & \textbf{AUROC} & \textbf{Prec} & \textbf{Rec} & \textbf{F1} & \textbf{FP} \\
\midrule
CDNOTS (skeleton)          & .839 & .094 & 1.00 & .171 & 106 \\
\grace{} (single run)      & .938 & .138 & 1.00 & .242 & 69 \\
\textbf{\grace{}-Bootstrap} & \textbf{.986} & \textbf{.900} & \textbf{.818} & \textbf{.857} & \textbf{1} \\
\bottomrule
\end{tabular}
\end{table}

Together, the synthetic experiments (Section~\ref{sec:results}) and this real-world evaluation cover complementary axes of difficulty: the former tests scalability under controlled conditions, the latter tests robustness when causal assumptions are simultaneously and severely violated---a combination that no single benchmark can address.

\section{Conclusion}
\label{sec:conclusion}

We presented \grace{}, a framework that refines constraint-based causal discovery via Hard Concrete gates with $L_0$ regularization.
On synthetic benchmarks, \grace{} nearly doubles F1 over raw CDNOTS at $d=100$ (0.92 vs.\ 0.53) with 97\% precision, generalizing across skeleton sources (PCMCI, DYNOTEARS) and graph topologies.
On the real-world Elbe River benchmark~\citep{stein2025causalrivers}, a temporal bootstrap variant achieves $F_1 = 0.86$ with AUROC${} = 0.99$ despite hidden confounders, variable lags, and distributional shifts.

\paragraph{Limitations.}
On dense graphs (scale-free hubs at $d \gtrsim 70$), per-edge SNR decreases and gate values cluster below threshold---AUROC remains high but binary F1 degrades, partially mitigated by more data.
Spurious edges from latent confounders may survive if they improve prediction.
The linear skeleton may miss purely symmetric nonlinear dependencies (e.g., $x^2$) that induce no linear association; a nonlinear CI skeleton mitigates this at higher cost (Appendix~\ref{app:rcot}).
Gated refinement trades a small number of true positives for a large precision gain; the recall reduction is minor in our experiments but may be more pronounced at very low sample sizes.
\bibliographystyle{plainnat}
\bibliography{references}

\appendix

\section{Gate Value Distribution}
\label{app:gate_hist}

Figure~\ref{fig:gate_hist} plots the distribution of deterministic gate values (Eq.~\ref{eq:hc_det}) after training, partitioned by whether each skeleton edge corresponds to a true or false causal link.
At $d=100$, false edges cluster near zero (mass at $0.0$ and $0.25$--$0.40$) while true edges concentrate at $0.55$--$0.85$, with almost no overlap at the $0.5$ decision boundary.
At $d=50$, the separation is somewhat narrower, with minor overlap near $0.5$, consistent with the slightly lower F1 at this scale.
The clear bimodal pattern confirms that Hard Concrete gates learn to separate true from false edges without requiring post-hoc threshold tuning: the $0.5$ boundary falls squarely in the gap between the two modes.

\begin{figure}[ht]
  \centering
  \includegraphics[width=\textwidth]{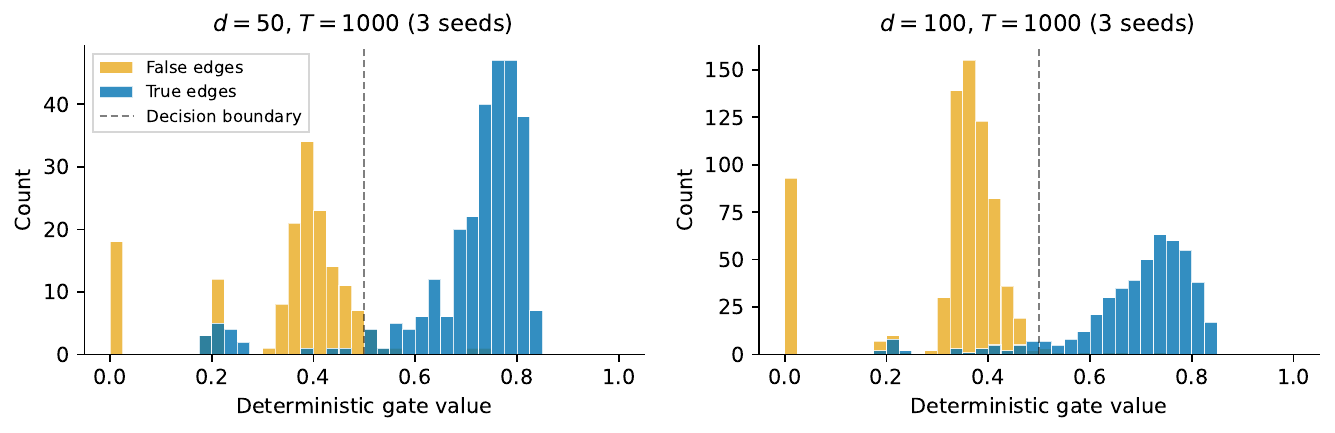}
  \caption{Distribution of deterministic gate values on SCP benchmarks ($T=1000$, 3 seeds).
  True causal edges (blue) concentrate above the $0.5$ decision boundary; false edges (yellow) concentrate below it.
  The separation is sharper at higher $d$, where the $L_0$ penalty provides stronger sparsity pressure.}
  \label{fig:gate_hist}
\end{figure}

\section{Theoretical Analysis of \texorpdfstring{$\lambda$}{lambda}-Sensitivity}
\label{app:lambda}

This appendix analyzes the $\lambda$-sensitivity of the gated model under a simplified linear Gaussian setting.
The results provide \emph{qualitative intuition} for why a single $\lambda$ can separate true and false edges, but do not constitute identifiability guarantees for the full nonlinear model with MLP decoders.
We view these results as motivation for the design choices in Section~\ref{sec:cdnots_gated}, not as formal recovery proofs.

\subsection{Setup and Notation}

Consider a stationary VAR process with $d$ variables and maximum lag $L$:
\begin{equation}
  x^e_t = \sum_{(c,\ell) \in \text{pa}(e)} \beta_{c\ell}^e \, x^c_{t-\ell} + \varepsilon^e_t, \qquad \varepsilon^e_t \stackrel{\text{iid}}{\sim} \mathcal{N}(0, \sigma_e^2),
  \label{eq:dgp}
\end{equation}
where $\text{pa}(e)$ denotes the true parent set of variable $e$ (the set of $(c,\ell)$ pairs with $\beta_{c\ell}^e \neq 0$).
We analyze the linear encoder case; the nonlinear extension is discussed in the Remark (Nonlinear extension) below.

The per-sample loss for effect $e$, with weight $w_{c\ell}^e$ for each gated input, is:
\begin{equation}
  \ell_e = \frac{1}{2\hat\sigma_e^2}\left(x^e_t - \sum_{c,\ell} z_{c\ell e} \, w_{c\ell}^e \, x^c_{t-\ell}\right)^2 + \log\hat\sigma_e + \tfrac{1}{2}\log 2\pi.
  \label{eq:per_sample_loss}
\end{equation}
The full training objective (Eq.~\ref{eq:loss}) averages $\ell_e$ over both $T$ samples and $d$ effects---i.e., $\overline{\text{NLL}} = \frac{1}{d}\sum_e \frac{1}{T}\sum_t \ell_e(t)$---and adds the $L_0$ penalty $\lambda \cdot s \sum_{c,\ell,e} \PP(z_{c\ell e} \neq 0)$, where $s = 1/\lceil T/B \rceil$ is the per-step normalization factor.
The $1/d$ averaging is important: it means each gate's NLL contribution is scaled by $1/d$, which enters the critical $\lambda$ formula below.

\subsection{Per-Edge NLL Reduction}

\begin{definition}[Marginal NLL reduction]
For edge $(c,\ell,e)$, define $\Delta_{c\ell e}$ as the decrease in average NLL when the gate is opened (with optimal weight $w^*$) compared to when it is forced closed:
\begin{equation}
  \Delta_{c\ell e} = \frac{1}{T}\sum_{t=1}^{T}\left[\ell_e\big|_{z_{c\ell e}=0} - \ell_e\big|_{z_{c\ell e}=1,\, w=w^*}\right].
  \label{eq:delta_def}
\end{equation}
\end{definition}

Under the linear Gaussian DGP (Eq.~\ref{eq:dgp}), assuming the other gates and weights are at their population optima:

\paragraph{True edge $((c,\ell) \in \text{pa}(e))$.}
The optimal weight is $w^{*} = \beta_{c\ell}^e$, and the NLL reduction equals the variance uniquely explained by this edge---the variance of $x^c_{t-\ell}$ after projecting out all other parents already included:
\begin{equation}
  \Delta_{c\ell e}^{\text{true}} = \frac{(\beta_{c\ell}^e)^2 \, \widehat{\text{Var}}(x^c_{t-\ell} \mid \widehat{\text{pa}}^e_{-(c,\ell)})}{2\,\sigma_e^2},
  \label{eq:delta_true}
\end{equation}
where $\widehat{\text{Var}}(x^c_{t-\ell} \mid \widehat{\text{pa}}^e_{-(c,\ell)})$ is the empirical variance of the residual of $x^c_{t-\ell}$ after linear projection onto all other true parents of $e$.
This is the conditional variance formula from multivariate regression: in a regression with correlated regressors, the incremental gain from one predictor depends on its residualized variance, not its marginal variance.
By the law of large numbers this converges to $\text{Var}(x^c_{t-\ell} \mid \text{pa}^e_{-(c,\ell)}) > 0$ (positive unless a true parent is a perfect linear combination of others), so $\Delta^{\text{true}}$ converges to a \emph{positive constant}.

\paragraph{False edge $((c,\ell) \notin \text{pa}(e))$.}
Opening a spurious gate adds one free parameter that can only fit noise.
By standard regression theory, the in-sample NLL reduction from fitting one noise regressor to $T$ residuals is:
\begin{equation}
  \Delta_{c\ell e}^{\text{false}} = \frac{\hat{r}_{c\ell e}^2}{2} + O(T^{-1}),
  \label{eq:delta_false}
\end{equation}
where $\hat{r}_{c\ell e}$ is the sample correlation between the residual and $x^c_{t-\ell}$.
Under the null hypothesis (no causal effect), $T \cdot \hat{r}^2 \sim \chi^2(1)$, so $\hat{r}^2$ is itself $O(1/T)$ and
\begin{equation}
  \EE[\Delta^{\text{false}}] = \frac{1}{2T}.
  \label{eq:delta_false_exp}
\end{equation}

\subsection{Critical \texorpdfstring{$\lambda$}{lambda} and Separation}

\begin{proposition}[Edge-wise critical $\lambda$]
\label{prop:critical_lambda}
At a local minimum of the expected loss, gate $(c,\ell,e)$ is open (i.e., $\log\alpha_{c\ell e} > 0$, giving deterministic gate $z > 0.5$) if and only if the per-edge NLL reduction exceeds the marginal $L_0$ cost.
Define the \emph{critical penalty} as:
\begin{equation}
  \lambda^*_{c\ell e} = \frac{\Delta_{c\ell e} \cdot C}{d \cdot s},
  \quad C = \frac{R}{\kappa},
  \label{eq:critical_lambda}
\end{equation}
where $\kappa = \frac{\partial}{\partial \log\alpha}\PP(z \neq 0)\big|_{\log\alpha=0}$ is the sensitivity of the gate-open probability at the decision boundary, and $R = \EE\!\left[\frac{\partial z}{\partial \log\alpha}\right]\big|_{\log\alpha=0}$ is the expected reparameterization gradient of the gate value.
The factor $1/d$ arises because the NLL is averaged over $d$ effects, so each gate's contribution to the loss is scaled by $1/d$.
With the default parameters $(\tau = 2/3, \gamma = -0.1, \zeta = 1.1)$:
\begin{equation}
  \kappa = \sigma(c_0)(1 - \sigma(c_0)) \approx 0.140,
  \quad c_0 = -\tau \log\frac{-\gamma}{\zeta} \approx 1.60,
\end{equation}
and $R \approx 0.219$ (computed via Monte Carlo over the Hard Concrete noise), yielding $C = R/\kappa \approx 1.56$.
The gate is open when $\lambda < \lambda^*_{c\ell e}$ and closed when $\lambda > \lambda^*_{c\ell e}$.
\end{proposition}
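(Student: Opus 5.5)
The plan is a first-order stationarity argument in the single coordinate $\log\alpha_{c\ell e}$. All other gates, the weights $w$, and $\hat\sigma_e$ are held at their (population) optima, as in the setup of Eqs.~\eqref{eq:delta_true}--\eqref{eq:delta_false}. Write the expected objective as a function of $a := \log\alpha_{c\ell e}$ alone:
\[
F(a) = \tfrac{1}{d}\,\EE_u\bigl[\overline{\ell}_e(z(a,u))\bigr] + \lambda s\, \PP(z\neq 0)(a).
\]
The first term is the NLL contribution of effect $e$, carrying the $1/d$ weight from averaging over effects. The second term is the only part of the $L_0$ penalty that depends on $a$, by Eq.~\eqref{eq:l0_penalty}.

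\textbf{Step 1: linearize the NLL in $z$.} With the optimal weight $w^*$ fixed, the NLL is (to first order) an affine function of the gate value that interpolates between the closed and open values. Its slope is therefore $-\Delta_{c\ell e}$ by Definition~\eqref{eq:delta_def}, so $\partial_z \overline{\ell}_e \approx -\Delta_{c\ell e}$. By the chain rule and the reparameterization trick (Eqs.~\eqref{eq:hc_u}--\eqref{eq:hc_clamp}), the gradient of the NLL term is $-\tfrac{1}{d}\Delta_{c\ell e}\,\EE_u[\partial z/\partial a]$. Differentiating Eq.~\eqref{eq:l0_penalty} gives the penalty gradient $\lambda s\,\sigma(a+c_0)(1-\sigma(a+c_0))$.

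\textbf{Step 2: evaluate at the decision boundary $a=0$.} Here the deterministic gate of Eq.~\eqref{eq:hc_det} equals $0.6-0.1=0.5$, so $a>0$ is exactly the event $z>0.5$. At this point the two derivative factors are the constants $R$ and $\kappa$ of the statement. This gives
\[
F'(0) = -\tfrac{1}{d}\Delta_{c\ell e} R + \lambda s \kappa .
\]
The sign of $F'(0)$ therefore flips exactly at $\lambda^*_{c\ell e} = \Delta_{c\ell e}R/(d s\kappa) = \Delta_{c\ell e} C/(d s)$.

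If $\lambda<\lambda^*$, then $F'(0)<0$, so descent pushes $a$ above $0$ and the local minimum lies in $a>0$ (gate open). If $\lambda>\lambda^*$, then $F'(0)>0$, the minimum lies in $a<0$, and the gate is closed.

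The numerical constants are routine. Computing $c_0=-\tfrac23\log(0.1/1.1)\approx1.60$ gives $\kappa=\sigma(c_0)(1-\sigma(c_0))\approx0.140$. Writing $\partial z/\partial a = \mathbf{1}\{0<\bar z<1\}(\zeta-\gamma)\,s(1-s)/\tau$ and averaging over $u$ gives $R\approx0.219$ by Monte Carlo, hence $C\approx1.56$.

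\textbf{Main obstacle.} The "if and only if" needs the sign of $F'$ at $0$ to determine which side of $0$ the local minimum lies on. This requires $F$ to be unimodal near the boundary, i.e.\ the gradient ratio $\EE[\partial z/\partial a]/\PP'(z\neq0)(a)$ must not change the comparison between $a=0$ and the minimizer. I would first try to argue that both derivative functions are smooth and positive, and that the linearization in Step 1 is exact in the linear Gaussian model: the loss is quadratic in $z w$, and $w$ re-optimizes. From this it would follow that $F'$ crosses zero once in the relevant region.

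If a global monotonicity argument fails, I would state the result as a boundary criterion: at $a=0$, gradient descent opens the gate if and only if $\lambda<\lambda^*$. This matches the "local minimum" and intuition-level framing of the appendix.
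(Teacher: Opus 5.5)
Your Steps 1--2 are the paper's proof. The paper writes the same stationarity condition $-\frac{\Delta_{c\ell e}}{d}\,\EE[\partial z/\partial\log\alpha] + \lambda s\,\partial_{\log\alpha}\PP(z\neq 0) = 0$, evaluates the two factors at $\log\alpha=0$ as $R$ and $\kappa$, and solves for $\lambda^*_{c\ell e}$; it never addresses the unimodality issue you flag, so your boundary-criterion fallback is exactly what it establishes. One caution on your proposed repair: the slope $-\Delta_{c\ell e}$ is a chord approximation, not exact, even in the linear Gaussian model, because with the weights held fixed the NLL is quadratic rather than affine in $z$, and with $w$ free to re-optimize a deterministic gate's NLL is constant for all $z>0$, so that argument for a single zero-crossing of $F'$ does not go through as stated.
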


\begin{proof}
The training loss decomposes as $\mathcal{L} = \overline{\text{NLL}} + \lambda \cdot s \sum_{c,\ell,e} \PP(z_{c\ell e} \neq 0)$, where $\overline{\text{NLL}} = \frac{1}{d}\sum_e \frac{1}{T}\sum_t \ell_e(t)$.
Since gate $(c,\ell,e)$ only affects $\ell_e$, the stationarity condition is:
\begin{equation}
  \frac{\partial \mathcal{L}}{\partial \log\alpha_{c\ell e}} = \underbrace{-\frac{\Delta_{c\ell e}}{d} \cdot \EE\!\left[\frac{\partial z}{\partial \log\alpha}\right]}_{\text{NLL gradient (favors opening)}} + \underbrace{\lambda \cdot s \cdot \frac{\partial \PP(z \neq 0)}{\partial \log\alpha_{c\ell e}}}_{\text{$L_0$ gradient (favors closing)}} = 0.
  \label{eq:stationarity}
\end{equation}
At the decision boundary $\log\alpha = 0$, both $R$ and $\kappa$ are strictly positive.
Setting the gradients equal and solving for $\lambda$ gives $\lambda^* = \Delta \cdot R / (d \cdot s \cdot \kappa) = \Delta \cdot C / (d \cdot s)$.
\end{proof}

\begin{corollary}[Sample-size separation]
\label{cor:separation}
Substituting the NLL reductions from Eqs.~\ref{eq:delta_true}--\ref{eq:delta_false}, and writing $V^{\text{cond}}_{c\ell e} = \text{Var}(x^c_{t-\ell} \mid \text{pa}^e_{-(c,\ell)})$ for the conditional variance:
\begin{align}
  \lambda^{*\text{true}}_{c\ell e} &= \frac{(\beta_{c\ell}^e)^2 \, V^{\text{cond}}_{c\ell e} \cdot C}{2\,\sigma_e^2 \cdot d \cdot s} + o(1), \label{eq:lam_true} \\[4pt]
  \lambda^{*\text{false}}_{c\ell e} &= \frac{C}{2T \cdot d \cdot s} + o(T^{-1}). \label{eq:lam_false}
\end{align}
The separation ratio is:
\begin{equation}
  \frac{\lambda^{*\text{true}}}{\lambda^{*\text{false}}} = T \cdot \frac{(\beta_{c\ell}^e)^2 \, V^{\text{cond}}_{c\ell e}}{\sigma_e^2} = T \cdot \text{SNR}^{\text{cond}}_{c\ell e}.
  \label{eq:ratio}
\end{equation}
Note that $C$, $d$, and $s$ cancel in the ratio, so the separation depends only on $T$ and the edge's \emph{conditional} signal-to-noise ratio $\text{SNR}^{\text{cond}} = (\beta_{c\ell}^e)^2 V^{\text{cond}}_{c\ell e} / \sigma_e^2$.
For any minimum signal strength $\text{SNR}_{\min} > 0$, there exists $T^* = 1/\text{SNR}_{\min}$ such that for $T > T^*$, the interval $(\lambda^{*\text{false}}_{\max},\, \lambda^{*\text{true}}_{\min})$ is nonempty, and \emph{any} $\lambda$ in this interval yields perfect edge recovery (zero FP, full TP).
This bound uses the per-edge expected $\Delta^{\text{false}}$; for a simultaneous guarantee over all $n_{\text{false}}$ spurious edges, extreme-value corrections on $\max_i \hat{r}_i^2$ increase $T^*$ to $O(\log(n_{\text{false}}) / \text{SNR}_{\min})$.
\end{corollary}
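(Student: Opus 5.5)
The argument substitutes the two NLL-reduction regimes into the critical-penalty formula of Proposition~\ref{prop:critical_lambda}, $\lambda^* = \Delta\cdot C/(d\cdot s)$, and then reads off the ratio and the threshold on $T$.

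\textbf{Step 1: the two critical penalties.} For a true edge, insert Eq.~\ref{eq:delta_true}. The empirical conditional variance $\widehat{\text{Var}}(x^c_{t-\ell}\mid\widehat{\text{pa}}^e_{-(c,\ell)})$ converges to $V^{\text{cond}}_{c\ell e}$ by the law of large numbers for the stationary VAR, which is ergodic with finite second moments. Because $C$, $d$ and $s$ are fixed constants, the difference is absorbed into an $o(1)$ term, and this gives Eq.~\ref{eq:lam_true}. For a false edge, take the expected reduction $\EE[\Delta^{\text{false}}]=1/(2T)$ from Eq.~\ref{eq:delta_false_exp}, including the $O(T^{-1})$ correction from Eq.~\ref{eq:delta_false}, which I will argue is lower order once expectations are taken. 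Inserting this gives Eq.~\ref{eq:lam_false}. In both cases I keep $\sigma_e^2$ as the true noise variance, since at the population optimum $\hat\sigma_e\to\sigma_e$.

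\textbf{Step 2: the ratio.} Divide the two leading terms. The factor $C/(2\,d\,s)$ cancels, leaving $T(\beta^e_{c\ell})^2V^{\text{cond}}_{c\ell e}/\sigma_e^2 = T\cdot\text{SNR}^{\text{cond}}$. One caveat needs flagging: $s=1/\lceil T/B\rceil$ depends on $T$. It still cancels exactly because it appears identically in both expressions. The ratio statement concerns leading-order terms only.

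\textbf{Step 3: the separation interval and perfect recovery.} The interval $(\lambda^{*\text{false}}_{\max},\lambda^{*\text{true}}_{\min})$ is nonempty as soon as $T\cdot\text{SNR}_{\min}>1$, so $T^*=1/\text{SNR}_{\min}$ up to the $o(\cdot)$ slack. Choose any $\lambda$ in that interval. Proposition~\ref{prop:critical_lambda} then says every true gate satisfies $\lambda<\lambda^*$ and is open, while every false gate satisfies $\lambda>\lambda^*$ and is closed. That is zero false positives and full true positives.

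\textbf{Step 4: the main obstacle, simultaneous control over all false edges.} Step 3 used only the expected $\Delta^{\text{false}}$, but recovery requires control of $\max_i \hat r_i^2$. Under the null, $T\hat r_i^2$ is approximately $\chi^2(1)$. A union bound with the $\chi^2(1)$ tail $\PP(\chi^2_1>u)\le e^{-u/2}$ gives $\max_i T\hat r_i^2\le 2\log(n_{\text{false}}/\delta)$ with probability at least $1-\delta$. This inflates $\lambda^{*\text{false}}_{\max}$ by a factor of order $\log n_{\text{false}}$, and hence $T^*=O(\log(n_{\text{false}})/\text{SNR}_{\min})$. The delicate points are that the $\hat r_i$ are dependent and computed from autocorrelated data. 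I would handle this either with a union bound, which needs no independence but does need a mixing-based tail bound for the time-series correlations, or by appealing to asymptotic normality of $\sqrt{T}\hat r_i$. I would state this step at the heuristic level the appendix adopts.
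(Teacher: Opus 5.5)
Your proposal is correct and follows the paper's own route: substitute the two $\Delta$ regimes into $\lambda^*=\Delta\,C/(d\,s)$ from Proposition~\ref{prop:critical_lambda}, cancel the common factor $C/(2\,d\,s)$, and read off $T^*=1/\text{SNR}_{\min}$; your union bound via $\PP(\chi^2_1>u)\le e^{-u/2}$ is a more explicit version of the paper's one-line extreme-value remark. One tidy-up: in Step~1 you call $s$ a fixed constant, but as you note in Step~2, $s=1/\lceil T/B\rceil\to 0$ for large $T$, so the remainder in Eq.~\ref{eq:lam_true} should be read as relative, $\frac{C}{d\,s}\bigl(\tfrac{(\beta^e_{c\ell})^2V^{\text{cond}}_{c\ell e}}{2\sigma_e^2}+o(1)\bigr)$, which is all the ratio and interval arguments use; likewise you need not argue anything about the $O(T^{-1})$ term in Eq.~\ref{eq:delta_false}, since Eq.~\ref{eq:delta_false_exp} already supplies $\EE[\Delta^{\text{false}}]=1/(2T)$.
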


\begin{remark}[Nonlinear extension]
For a nonlinear DGP $x^e_t = f_e(\{x^c_{t-\ell}\}_{(c,\ell) \in \text{pa}(e)}) + \varepsilon^e_t$ with MLP decoders, the NLL reduction for a true edge generalizes to
$\Delta^{\text{true}} = \|\text{proj}_{x^c} r^e\|^2 / (2\sigma_e^2)$,
where $r^e$ is the residual from all other parents and $\text{proj}_{x^c} r^e$ denotes the projection of $r^e$ onto $x^c_{t-\ell}$.
This is the nonlinear analogue of the conditional variance $V^{\text{cond}}_{c\ell e}$ in Eq.~\ref{eq:delta_true}: both measure the unique variance contribution of the $(c,\ell)$ edge after accounting for all other parents.
The false-edge reduction becomes $O(p_{\text{eff}} / T)$, where $p_{\text{eff}}$ is the effective number of parameters the decoder can fit through one open gate---equal to the hidden dimension $h$ when the encoder is pre-trained.
The separation ratio is $T \cdot \text{SNR}^{\text{cond}} / p_{\text{eff}}$, preserving the $O(T)$ growth whenever the true causal signal is bounded away from zero.
\end{remark}

\begin{remark}[Gate initialization effect]
The critical $\lambda$ in Proposition~\ref{prop:critical_lambda} is evaluated at the decision boundary $\log\alpha = 0$.
At the default initialization $\log\alpha_0 = -0.5$, the ratio $C(-0.5) = R(-0.5)/\kappa(-0.5) \approx 1.12 < C(0)$, so the threshold for a false gate to \emph{begin} climbing toward the boundary is lower: $\lambda^*_{\text{start}} = \Delta_{\text{false}} \cdot C(-0.5) / (d \cdot s)$.
For $\lambda > \lambda^*_{\text{start}}$, false gates are pushed further closed from initialization and never reach the decision boundary, providing an additional safety margin beyond the equilibrium analysis.
\end{remark}

\section{Nonlinear CI Test Results}
\label{app:rcot}

Section~\ref{sec:rcot_comparison} compares the fast linear CI skeleton (ParCorr) with a nonlinear one (RCoT) in terms of F1 accuracy and runtime cost.
Here we present the full results.
We compare CDNOTS-Gated with the default ParCorr skeleton against CDNOTS-Gated with an RCoT skeleton, reporting both F1 and runtime.

\begin{table}[ht]
\centering
\caption{F1 scores and runtime (seconds) for ParCorr-Gated vs.\ RCoT-Gated (mean $\pm$ std over 10 seeds). Best F1 per row in \textbf{bold}. RCoT-Gated generally outperforms ParCorr-Gated, but at 20--75$\times$ the runtime cost.}
\label{tab:f1_rcot}
\small
\begin{tabular}{l l cc cc}
\toprule
$d$ & $T$ & \makecell{ParCorr-Gated\\F1} & \makecell{RCoT-Gated\\F1} & \makecell{ParCorr\\Time (s)} & \makecell{RCoT\\Time (s)} \\
\midrule
\multirow{4}{*}{20}
& 300  & .794\tiny{$\pm$.05} & \textbf{.804}\tiny{$\pm$.06} & 38\tiny{$\pm$8} & 380\tiny{$\pm$282} \\
& 500  & .845\tiny{$\pm$.04} & \textbf{.853}\tiny{$\pm$.04} & 42\tiny{$\pm$9} & 726\tiny{$\pm$542} \\
& 1000 & .868\tiny{$\pm$.04} & \textbf{.872}\tiny{$\pm$.04} & 38\tiny{$\pm$9} & 852\tiny{$\pm$688} \\
& 2000 & .893\tiny{$\pm$.03} & \textbf{.911}\tiny{$\pm$.04} & 43\tiny{$\pm$10} & 1{,}129\tiny{$\pm$1{,}018} \\
\midrule
\multirow{4}{*}{30}
& 300  & .804\tiny{$\pm$.04} & \textbf{.840}\tiny{$\pm$.04} & 56\tiny{$\pm$5} & 611\tiny{$\pm$153} \\
& 500  & .851\tiny{$\pm$.05} & \textbf{.879}\tiny{$\pm$.03} & 53\tiny{$\pm$11} & 1{,}189\tiny{$\pm$347} \\
& 1000 & .893\tiny{$\pm$.03} & \textbf{.900}\tiny{$\pm$.03} & 55\tiny{$\pm$13} & 1{,}554\tiny{$\pm$449} \\
& 2000 & .908\tiny{$\pm$.04} & \textbf{.932}\tiny{$\pm$.03} & 61\tiny{$\pm$13} & 1{,}845\tiny{$\pm$635} \\
\midrule
\multirow{4}{*}{50}
& 300  & .841\tiny{$\pm$.03} & \textbf{.847}\tiny{$\pm$.03} & 99\tiny{$\pm$3} & 1{,}874\tiny{$\pm$610} \\
& 500  & .886\tiny{$\pm$.03} & \textbf{.897}\tiny{$\pm$.02} & 95\tiny{$\pm$5} & 3{,}479\tiny{$\pm$1{,}158} \\
& 1000 & .915\tiny{$\pm$.02} & \textbf{.928}\tiny{$\pm$.02} & 100\tiny{$\pm$9} & 4{,}060\tiny{$\pm$1{,}326} \\
& 2000 & .911\tiny{$\pm$.03} & \textbf{.941}\tiny{$\pm$.02} & 105\tiny{$\pm$10} & 4{,}671\tiny{$\pm$1{,}429} \\
\midrule
\multirow{4}{*}{70}
& 300  & \textbf{.849}\tiny{$\pm$.05} & .855\tiny{$\pm$.04} & 129\tiny{$\pm$7} & 4{,}385\tiny{$\pm$1{,}638} \\
& 500  & \textbf{.896}\tiny{$\pm$.03} & .859\tiny{$\pm$.05} & 129\tiny{$\pm$2} & 7{,}926\tiny{$\pm$2{,}743} \\
& 1000 & \textbf{.927}\tiny{$\pm$.02} & .918\tiny{$\pm$.03} & 159\tiny{$\pm$9} & 9{,}287\tiny{$\pm$3{,}118} \\
& 2000 & .929\tiny{$\pm$.02} & \textbf{.943}\tiny{$\pm$.03} & 156\tiny{$\pm$10} & 10{,}074\tiny{$\pm$2{,}954} \\
\midrule
\multirow{4}{*}{100}
& 300  & \textbf{.830}\tiny{$\pm$.06} & .804\tiny{$\pm$.08} & 181\tiny{$\pm$15} & 8{,}321\tiny{$\pm$3{,}745} \\
& 500  & \textbf{.878}\tiny{$\pm$.04} & .875\tiny{$\pm$.06} & 234\tiny{$\pm$11} & 14{,}971\tiny{$\pm$6{,}567} \\
& 1000 & \textbf{.920}\tiny{$\pm$.02} & .916\tiny{$\pm$.04} & 228\tiny{$\pm$8} & 17{,}059\tiny{$\pm$7{,}128} \\
& 2000 & .922\tiny{$\pm$.02} & \textbf{.944}\tiny{$\pm$.03} & 243\tiny{$\pm$21} & 18{,}962\tiny{$\pm$7{,}103} \\
\bottomrule
\end{tabular}

\end{table}

Table~\ref{tab:f1_rcot} compares both F1 and runtime for the two Gated variants.
RCoT-Gated outperforms ParCorr-Gated at high $T$, since RCoT can detect nonlinear conditional dependencies that linear partial correlation misses.
At $d=100$/$T=2000$, RCoT-Gated achieves F1\,$=$\,0.944 vs.\ 0.922 for ParCorr-Gated.
However, at moderate $T$, the gap narrows or reverses (e.g., 0.916 vs.\ 0.920 at $d=100$/$T=1000$), while the runtime cost is substantial: at $d=50$/$T=1000$, RCoT-Gated requires 4{,}060s vs.\ 100s for ParCorr-Gated---a 40$\times$ slowdown.
At $d=100$, the ratio reaches 75$\times$ (${\sim}17{,}000$s vs.\ ${\sim}228$s).
The practical conclusion is clear: for $d \geq 50$, ParCorr-Gated is the most practical choice, with RCoT offering diminishing returns at disproportionate cost.

\section{SHD Results}
\label{app:shd}

We report the Structural Hamming Distance (SHD)---the total number of edge additions, deletions, and reversals needed to transform the predicted graph into the true graph---to complement the F1 results in the main text.
Lower SHD indicates a predicted graph closer to ground truth.

\begin{table}[ht]
\centering
\caption{SHD on sparse SCP benchmarks (mean $\pm$ std over 10 seeds). Best per row in \textbf{bold}. CDNOTS-Gated achieves the lowest SHD at all dimensionalities.}
\label{tab:shd}
\small
\begin{tabular}{l l cccc}
\toprule
$d$ & $T$ & PCMCI & CDNOTS & TCDF & \textbf{CDNOTS-Gated} \\
\midrule
\multirow{4}{*}{20}
& 300  & 104\tiny{$\pm$9} & 19\tiny{$\pm$5} & 32\tiny{$\pm$9} & \textbf{17}\tiny{$\pm$5} \\
& 500  & 108\tiny{$\pm$12} & 17\tiny{$\pm$4} & 26\tiny{$\pm$10} & \textbf{13}\tiny{$\pm$4} \\
& 1000 & 104\tiny{$\pm$16} & 14\tiny{$\pm$4} & 23\tiny{$\pm$10} & \textbf{11}\tiny{$\pm$4} \\
& 2000 & 106\tiny{$\pm$18} & 14\tiny{$\pm$6} & 22\tiny{$\pm$10} & \textbf{9}\tiny{$\pm$4} \\
\midrule
\multirow{4}{*}{30}
& 300  & 226\tiny{$\pm$12} & 31\tiny{$\pm$5} & 49\tiny{$\pm$11} & \textbf{24}\tiny{$\pm$6} \\
& 500  & 227\tiny{$\pm$16} & 30\tiny{$\pm$6} & 40\tiny{$\pm$14} & \textbf{19}\tiny{$\pm$7} \\
& 1000 & 226\tiny{$\pm$9} & 26\tiny{$\pm$8} & 32\tiny{$\pm$15} & \textbf{13}\tiny{$\pm$4} \\
& 2000 & 229\tiny{$\pm$17} & 25\tiny{$\pm$4} & 31\tiny{$\pm$15} & \textbf{11}\tiny{$\pm$4} \\
\midrule
\multirow{4}{*}{50}
& 300  & 642\tiny{$\pm$21} & 66\tiny{$\pm$13} & 85\tiny{$\pm$18} & \textbf{30}\tiny{$\pm$7} \\
& 500  & 638\tiny{$\pm$32} & 69\tiny{$\pm$14} & 62\tiny{$\pm$19} & \textbf{21}\tiny{$\pm$7} \\
& 1000 & 627\tiny{$\pm$18} & 64\tiny{$\pm$12} & 49\tiny{$\pm$15} & \textbf{16}\tiny{$\pm$4} \\
& 2000 & 619\tiny{$\pm$31} & 74\tiny{$\pm$15} & 45\tiny{$\pm$14} & \textbf{18}\tiny{$\pm$6} \\
\midrule
\multirow{4}{*}{70}
& 300  & 1246\tiny{$\pm$34} & 117\tiny{$\pm$21} & 128\tiny{$\pm$19} & \textbf{37}\tiny{$\pm$16} \\
& 500  & 1233\tiny{$\pm$36} & 118\tiny{$\pm$18} & 92\tiny{$\pm$24} & \textbf{26}\tiny{$\pm$10} \\
& 1000 & 1241\tiny{$\pm$25} & 124\tiny{$\pm$25} & 60\tiny{$\pm$21} & \textbf{18}\tiny{$\pm$7} \\
& 2000 & 1210\tiny{$\pm$14} & 139\tiny{$\pm$26} & 52\tiny{$\pm$19} & \textbf{18}\tiny{$\pm$6} \\
\midrule
\multirow{4}{*}{100}
& 300  & 2575\tiny{$\pm$43} & 218\tiny{$\pm$36} & 159\tiny{$\pm$24} & \textbf{52}\tiny{$\pm$25} \\
& 500  & 2530\tiny{$\pm$54} & 243\tiny{$\pm$42} & 120\tiny{$\pm$38} & \textbf{41}\tiny{$\pm$19} \\
& 1000 & 2490\tiny{$\pm$42} & 281\tiny{$\pm$43} & 79\tiny{$\pm$38} & \textbf{27}\tiny{$\pm$12} \\
& 2000 & 2478\tiny{$\pm$47} & 313\tiny{$\pm$54} & 65\tiny{$\pm$34} & \textbf{26}\tiny{$\pm$10} \\
\bottomrule
\end{tabular}
\end{table}

Table~\ref{tab:shd} confirms the F1 findings: CDNOTS-Gated achieves the lowest SHD at all dimensionalities and sample sizes.
The SHD reduction over raw CDNOTS is particularly striking at high $d$ with sufficient samples: at $d=100$/$T=1000$, CDNOTS-Gated achieves SHD\,$=$\,27 vs.\ 281 for CDNOTS---a 90\% reduction in graph edit distance.
PCMCI's SHD grows dramatically with $d$ (reaching ${\sim}2{,}500$ at $d=100$), reflecting its high false positive rate, while TCDF achieves competitive SHD at high $T$ but is less consistent at low sample sizes.

\section{Score-Based Baseline: CUTS+ Comparison}
\label{app:cuts_plus}

Since CUTS+~\citep{cheng2023cuts_plus} outputs a pairwise probability matrix without lag resolution, we report \emph{pair-level} metrics: the lag dimension is collapsed (an edge exists if any lag is active) before computing precision, recall, and F1.
We use their published default settings: threshold~$0.5$, 64~training epochs, GRU hidden size~32, and Gumbel temperature annealing from $\tau=1$ to $\tau=0.1$.
The input window is set to match the true maximum lag ($L=5$).

\begin{table}[ht]
\centering
\caption{Pair-level F1 comparison between \grace{} (CDNOTS-Gated) and CUTS+ on sparse SCP benchmarks (mean $\pm$ std over 10 seeds).
CUTS+ achieves high recall but very low precision, resulting in pair F1 that degrades sharply with dimensionality.
\grace{} results use the same pair-level evaluation (lags collapsed) for fair comparison.}
\label{tab:cuts_plus}
\small
\begin{tabular}{l l cccc}
\toprule
$d$ & \textbf{Method} & $T=300$ & $T=500$ & $T=1000$ & $T=2000$ \\
\midrule
\multirow{2}{*}{20}
& CDNOTS-Gated  & \textbf{.662}\tiny{$\pm$.07} & \textbf{.736}\tiny{$\pm$.07} & \textbf{.813}\tiny{$\pm$.07} & \textbf{.844}\tiny{$\pm$.06} \\
& CUTS+         & .346\tiny{$\pm$.06} & .441\tiny{$\pm$.08} & .510\tiny{$\pm$.05} & .620\tiny{$\pm$.08} \\
\midrule
\multirow{2}{*}{50}
& CDNOTS-Gated  & \textbf{.732}\tiny{$\pm$.07} & \textbf{.814}\tiny{$\pm$.06} & \textbf{.824}\tiny{$\pm$.09} & \textbf{.878}\tiny{$\pm$.04} \\
& CUTS+         & .112\tiny{$\pm$.04} & .139\tiny{$\pm$.05} & .186\tiny{$\pm$.06} & .400\tiny{$\pm$.08} \\
\midrule
\multirow{2}{*}{100}
& CDNOTS-Gated  & \textbf{.791}\tiny{$\pm$.04} & \textbf{.821}\tiny{$\pm$.04} & \textbf{.834}\tiny{$\pm$.05} & \textbf{.859}\tiny{$\pm$.04} \\
& CUTS+         & .035\tiny{$\pm$.02} & .043\tiny{$\pm$.02} & .055\tiny{$\pm$.03} & .117\tiny{$\pm$.07} \\
\bottomrule
\end{tabular}
\end{table}

\begin{table}[ht]
\centering
\caption{Pair-level precision and recall at $T=1000$ (mean over 10 seeds).
CUTS+ opens most gates (high recall) but cannot distinguish true from false edges (low precision), especially at high $d$.}
\label{tab:cuts_plus_prec}
\small
\begin{tabular}{l cc cc cc}
\toprule
& \multicolumn{2}{c}{$d=20$} & \multicolumn{2}{c}{$d=50$} & \multicolumn{2}{c}{$d=100$} \\
\cmidrule(lr){2-3} \cmidrule(lr){4-5} \cmidrule(lr){6-7}
\textbf{Method} & Prec & Rec & Prec & Rec & Prec & Rec \\
\midrule
CDNOTS-Gated & \textbf{.975} & .702 & \textbf{.885} & .797 & \textbf{.922} & .763 \\
CUTS+        & .372 & \textbf{.838} & .105 & \textbf{.903} & .029 & \textbf{.912} \\
\bottomrule
\end{tabular}
\end{table}

Table~\ref{tab:cuts_plus} shows that CUTS+ pair-level F1 degrades sharply with dimensionality: from $0.510$ at $d=20$ to $0.055$ at $d=100$ ($T=1000$).
Table~\ref{tab:cuts_plus_prec} reveals the underlying cause: CUTS+ achieves high recall ($>83\%$) by opening most Gumbel-Softmax gates, but precision collapses from $37\%$ at $d=20$ to $2.9\%$ at $d=100$.
Its Gumbel-Softmax gates produce continuous probabilities that cluster in a narrow range (typically $0.36$--$0.71$), making it difficult to separate true from false edges regardless of threshold choice.
By contrast, \grace{}'s Hard Concrete gates concentrate near exact zero or one (Appendix~\ref{app:gate_hist}), yielding a robust binary decision at the $0.5$ boundary.

\section{Runtime}
\label{app:runtime}

Table~\ref{tab:runtime} and Figure~\ref{fig:runtime} compare wall-clock runtime at $T=1000$.
CDNOTS-Gated adds $2$--$3.5\times$ overhead over raw CDNOTS while achieving substantially higher F1.
The runtime is dominated by the CDNOTS skeleton at high $d$; the gated training phase itself scales linearly in the number of skeleton edges.

\begin{table}[ht]
\centering
\caption{Runtime in seconds at $T=1000$ (mean over 10 seeds). CDNOTS-Gated is practical even at $d=100$.}
\label{tab:runtime}
\small
\begin{tabular}{l rrrrr}
\toprule
\textbf{Method} & $d=20$ & $d=30$ & $d=50$ & $d=70$ & $d=100$ \\
\midrule
PCMCI          & 5    & 10   & 32   & 75   & 187  \\
CDNOTS        & 4    & 7    & 20   & 50   & 138  \\
TCDF           & 4    & 6    & 10   & 12   & 18   \\
\textbf{CDNOTS-Gated}  & 38   & 57   & 112  & 186  & 345  \\
\bottomrule
\end{tabular}
\end{table}

\begin{figure}[ht]
\centering
\includegraphics[width=\textwidth]{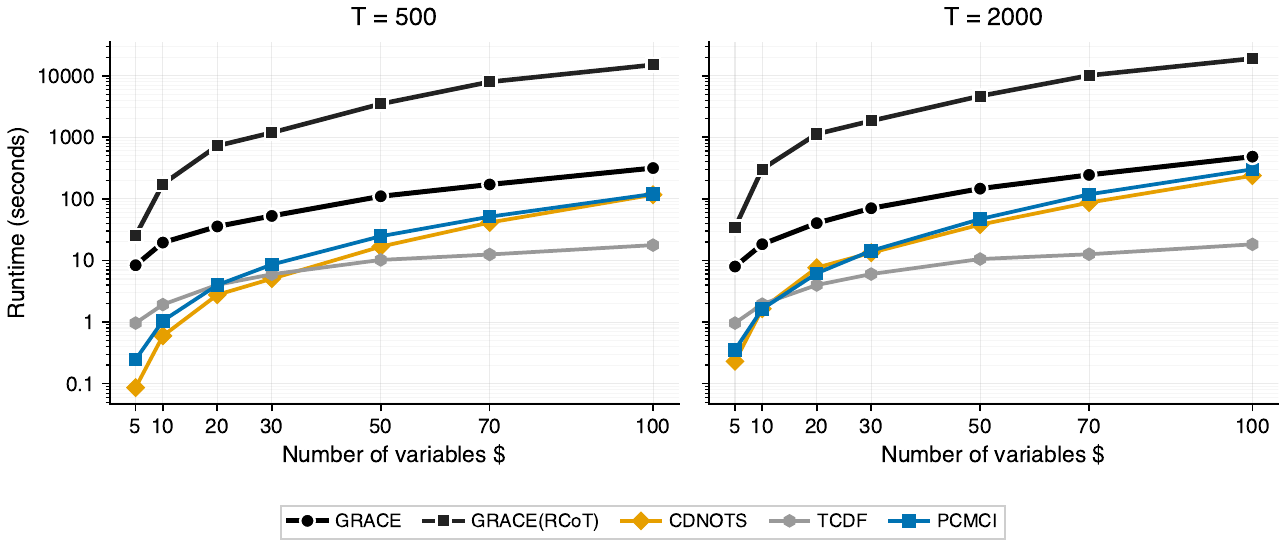}
\caption{Wall-clock runtime vs.\ $d$ (log scale). CDNOTS-Gated (black) adds moderate overhead over CDNOTS while achieving substantially higher F1. CDNOTS(RCoT)-Gated (dark gray) achieves higher F1 but at $50\times$ the cost at $d=100$.}
\label{fig:runtime}
\end{figure}

\section{Ablation Study}
\label{app:ablation}

We ablate three design choices in GRACE on the same SCP benchmarks used in the main experiments ($d \in \{50, 100\}$, $T \in \{500, 1000\}$, 10 seeds each).
The baseline is GRACE with full CDNOTS skeleton and the adaptive $\lambda$ formula.

\paragraph{Skeleton quality.}
Table~\ref{tab:abl_skeleton} evaluates three skeleton configurations:
(i)~\emph{No skeleton}---all $d^2(L+1) - d$ candidate edges are passed to the gated model;
(ii)~\emph{Shallow CI} ($\texttt{max\_degree}=1$)---only pairwise and first-order conditional independence tests;
(iii)~\emph{Full CDNOTS} (default)---conditioning sets up to the maximum graph degree.
Without any skeleton pruning, the gated model faces $\sim$15{,}000 candidate edges at $d=50$ for $\sim$100 true edges; no $\lambda$ setting can recover signal at this ratio (F1\,$<$\,0.02).
Shallow CI improves substantially (F1\,$=$\,0.46--0.71) but still underperforms full CDNOTS by 25--50\% in F1, reflecting the spurious edges that higher-order conditioning would remove.

\begin{table}[ht]
\centering
\caption{Skeleton quality ablation: F1 scores (mean $\pm$ std over 10 seeds).}
\label{tab:abl_skeleton}
\small
\begin{tabular}{l cccc}
\toprule
& \multicolumn{2}{c}{$d=50$} & \multicolumn{2}{c}{$d=100$} \\
\cmidrule(lr){2-3} \cmidrule(lr){4-5}
\textbf{Skeleton} & $T=500$ & $T=1000$ & $T=500$ & $T=1000$ \\
\midrule
No skeleton       & .012\tiny{$\pm$.01} & .013\tiny{$\pm$.01} & .006\tiny{$\pm$.00} & .006\tiny{$\pm$.00} \\
Shallow CI ($k=1$) & .458\tiny{$\pm$.03} & .614\tiny{$\pm$.07} & .697\tiny{$\pm$.07} & .635\tiny{$\pm$.08} \\
\textbf{Full CDNOTS}  & \textbf{.886}\tiny{$\pm$.03} & \textbf{.915}\tiny{$\pm$.02} & \textbf{.878}\tiny{$\pm$.04} & \textbf{.920}\tiny{$\pm$.02} \\
\bottomrule
\end{tabular}

\end{table}

\paragraph{Lambda sensitivity.}
Table~\ref{tab:abl_lambda} compares fixed $\lambda$ values against the adaptive formula (Eq.~\ref{eq:auto_lambda}).
A fixed $\lambda=0.01$ performs comparably at $d=50$, since both yield similar values when the skeleton is sparse.
At $d=100$, the adaptive formula's $1/d$ term lowers $\lambda$ appropriately, maintaining high F1 where a fixed value would overregularize.
$\lambda=0.05$ already degrades at $d=100$ (F1 drops from 0.92 to 0.49), and $\lambda \geq 0.1$ aggressively penalizes gates, suppressing most true edges.

\begin{table}[ht]
\centering
\caption{$\lambda$ sensitivity: F1 scores (mean $\pm$ std over 10 seeds).}
\label{tab:abl_lambda}
\small
\begin{tabular}{l cccc}
\toprule
& \multicolumn{2}{c}{$d=50$} & \multicolumn{2}{c}{$d=100$} \\
\cmidrule(lr){2-3} \cmidrule(lr){4-5}
\textbf{$\lambda$} & $T=500$ & $T=1000$ & $T=500$ & $T=1000$ \\
\midrule
$\lambda=0.01$    & .739\tiny{$\pm$.05} & .846\tiny{$\pm$.04} & .893\tiny{$\pm$.04} & .895\tiny{$\pm$.03} \\
$\lambda=0.05$    & .763\tiny{$\pm$.07} & .765\tiny{$\pm$.07} & .491\tiny{$\pm$.14} & .485\tiny{$\pm$.13} \\
$\lambda=0.1$     & .524\tiny{$\pm$.12} & .529\tiny{$\pm$.11} & .246\tiny{$\pm$.07} & .242\tiny{$\pm$.07} \\
$\lambda=0.5$     & .008\tiny{$\pm$.01} & .015\tiny{$\pm$.02} & .007\tiny{$\pm$.01} & .007\tiny{$\pm$.01} \\
\textbf{Adaptive}   & \textbf{.886}\tiny{$\pm$.03} & \textbf{.915}\tiny{$\pm$.02} & \textbf{.878}\tiny{$\pm$.04} & \textbf{.920}\tiny{$\pm$.02} \\
\bottomrule
\end{tabular}
\end{table}

\paragraph{Lag misspecification robustness.}
\label{app:lag_misspec}
In practice, the true maximum causal lag $L$ is unknown and must be specified by the practitioner.
We evaluate robustness to lag misspecification by generating SCP graphs with true max lag $L = 5$ ($d=50$, $T=1000$, 10 seeds) and running each method with assumed search lags $L_{\text{search}} \in \{2, 3, 5, 7, 10\}$.
Evaluation uses the ground truth padded (or truncated) to $L_{\text{search}}$, so that edges beyond the search horizon count as false negatives for all methods equally.

\begin{table}[ht]
\centering
\caption{F1 vs.\ assumed max lag (true max lag = 5), $d=50$, $T=1000$ (mean $\pm$ std over 10 seeds).}
\label{tab:lag_misspec}
\small
\begin{tabular}{l ccccc}
\toprule
\textbf{Method} & $L=2$ & $L=3$ & $L=5^*$ & $L=7$ & $L=10$ \\
\midrule
GRACE  & .793\tiny{$\pm$.024} & .825\tiny{$\pm$.036} & \textbf{.906}\tiny{$\pm$.022} & .878\tiny{$\pm$.026} & .891\tiny{$\pm$.022} \\
CDNOTS & .626\tiny{$\pm$.030} & .676\tiny{$\pm$.050} & .732\tiny{$\pm$.060} & .714\tiny{$\pm$.056} & .679\tiny{$\pm$.048} \\
\midrule
\multicolumn{6}{l}{\small $^*$True max lag.} \\
\bottomrule
\end{tabular}
\end{table}

Under underspecification ($L_{\text{search}} < 5$), late-lag true edges are outside the search space and count as false negatives for all methods equally.
GRACE degrades from $0.906$ at the correct lag to $0.793$ at $L=2$, while CDNOTS drops further to $0.626$---a gap of $0.167$ vs.\ $0.113$ points respectively, showing that GRACE is more robust even in the underspecified regime.
Under overspecification ($L_{\text{search}} > 5$), the advantage is more striking: GRACE maintains $F_1 = 0.878$--$0.891$ at $L \in \{7, 10\}$ (precision $\geq 0.908$), because the Hard Concrete $L_0$ penalty suppresses the extra false-lag candidates.
CDNOTS has no such suppression mechanism and loses $0.053$ F1 points at $L=10$ relative to $L=5$, as spurious cross-lag edges inflate false positives.

\section{PCMCI-Gated: Refinement Across Skeleton Sources}
\label{app:pcmci_gated}

To demonstrate that \grace{}'s gated refinement is not tied to a specific skeleton algorithm, we replace the CDNOTS skeleton with PCMCI~\citep{runge2019detecting} and apply the same gating pipeline (denoted PCMCI-Gated).
PCMCI provides an ideal stress test: it achieves very high recall ($>90\%$) but poor precision that degrades sharply with dimensionality (from ${\sim}30\%$ at $d=20$ to ${\sim}6\%$ at $d=100$), producing a skeleton with many false positives for gating to prune.

\begin{table}[ht]
\centering
\caption{F1 scores for PCMCI vs.\ PCMCI-Gated on sparse SCP benchmarks (mean $\pm$ std over 10 seeds).
Gated refinement consistently improves PCMCI, with the largest gains at high $d$ and sufficient $T$.}
\label{tab:pcmci_gated_f1}
\small
\begin{tabular}{l l cccc}
\toprule
$d$ & \textbf{Method} & $T=300$ & $T=500$ & $T=1000$ & $T=2000$ \\
\midrule
\multirow{2}{*}{20}
& PCMCI        & .436\tiny{$\pm$.03} & .433\tiny{$\pm$.02} & .455\tiny{$\pm$.03} & .454\tiny{$\pm$.03} \\
& PCMCI-Gated  & .427\tiny{$\pm$.03} & .431\tiny{$\pm$.03} & \textbf{.460}\tiny{$\pm$.04} & \textbf{.615}\tiny{$\pm$.07} \\
\midrule
\multirow{2}{*}{30}
& PCMCI        & .341\tiny{$\pm$.02} & .346\tiny{$\pm$.03} & .354\tiny{$\pm$.03} & .356\tiny{$\pm$.03} \\
& PCMCI-Gated  & .331\tiny{$\pm$.02} & .343\tiny{$\pm$.03} & \textbf{.537}\tiny{$\pm$.05} & \textbf{.733}\tiny{$\pm$.06} \\
\midrule
\multirow{2}{*}{50}
& PCMCI        & .221\tiny{$\pm$.02} & .228\tiny{$\pm$.02} & .236\tiny{$\pm$.02} & .241\tiny{$\pm$.02} \\
& PCMCI-Gated  & .242\tiny{$\pm$.03} & \textbf{.427}\tiny{$\pm$.06} & \textbf{.758}\tiny{$\pm$.07} & \textbf{.752}\tiny{$\pm$.08} \\
\midrule
\multirow{2}{*}{70}
& PCMCI        & .161\tiny{$\pm$.02} & .168\tiny{$\pm$.02} & .171\tiny{$\pm$.02} & .177\tiny{$\pm$.02} \\
& PCMCI-Gated  & \textbf{.345}\tiny{$\pm$.04} & \textbf{.687}\tiny{$\pm$.06} & \textbf{.710}\tiny{$\pm$.07} & \textbf{.696}\tiny{$\pm$.08} \\
\midrule
\multirow{2}{*}{100}
& PCMCI        & .105\tiny{$\pm$.02} & .113\tiny{$\pm$.02} & .117\tiny{$\pm$.02} & .119\tiny{$\pm$.02} \\
& PCMCI-Gated  & \textbf{.605}\tiny{$\pm$.11} & \textbf{.602}\tiny{$\pm$.12} & \textbf{.602}\tiny{$\pm$.13} & \textbf{.527}\tiny{$\pm$.16} \\
\bottomrule
\end{tabular}
\end{table}

\begin{table}[ht]
\centering
\caption{Precision and TPR at $T=1000$ for PCMCI vs.\ PCMCI-Gated (mean over 10 seeds).
Gating dramatically improves precision at the cost of some recall, converting PCMCI's high-recall/low-precision profile into a more balanced one.}
\label{tab:pcmci_gated_prec}
\small
\begin{tabular}{l cc cc cc cc cc}
\toprule
& \multicolumn{2}{c}{$d=20$} & \multicolumn{2}{c}{$d=30$} & \multicolumn{2}{c}{$d=50$} & \multicolumn{2}{c}{$d=70$} & \multicolumn{2}{c}{$d=100$} \\
\cmidrule(lr){2-3} \cmidrule(lr){4-5} \cmidrule(lr){6-7} \cmidrule(lr){8-9} \cmidrule(lr){10-11}
\textbf{Method} & Prec & TPR & Prec & TPR & Prec & TPR & Prec & TPR & Prec & TPR \\
\midrule
PCMCI        & .298 & .973 & .218 & .969 & .135 & .978 & .094 & .984 & .062 & .982 \\
PCMCI-Gated  & \textbf{.334} & .752 & \textbf{.420} & .755 & \textbf{.806} & .719 & \textbf{.743} & .695 & \textbf{.661} & .590 \\
\bottomrule
\end{tabular}
\end{table}

\begin{table}[ht]
\centering
\caption{SHD for PCMCI vs.\ PCMCI-Gated on sparse SCP benchmarks (mean $\pm$ std over 10 seeds).
Lower is better. Gating reduces SHD by up to $17\times$ at high $d$.}
\label{tab:pcmci_gated_shd}
\small
\begin{tabular}{l l rrrr}
\toprule
$d$ & \textbf{Method} & $T=300$ & $T=500$ & $T=1000$ & $T=2000$ \\
\midrule
\multirow{2}{*}{20}
& PCMCI        & 104\tiny{$\pm$8} & 108\tiny{$\pm$12} & 104\tiny{$\pm$15} & 106\tiny{$\pm$17} \\
& PCMCI-Gated  & \textbf{87}\tiny{$\pm$6} & \textbf{88}\tiny{$\pm$8} & \textbf{79}\tiny{$\pm$12} & \textbf{44}\tiny{$\pm$13} \\
\midrule
\multirow{2}{*}{30}
& PCMCI        & 226\tiny{$\pm$12} & 227\tiny{$\pm$15} & 226\tiny{$\pm$9} & 229\tiny{$\pm$17} \\
& PCMCI-Gated  & \textbf{189}\tiny{$\pm$13} & \textbf{182}\tiny{$\pm$11} & \textbf{83}\tiny{$\pm$8} & \textbf{35}\tiny{$\pm$11} \\
\midrule
\multirow{2}{*}{50}
& PCMCI        & 642\tiny{$\pm$20} & 638\tiny{$\pm$31} & 627\tiny{$\pm$17} & 619\tiny{$\pm$30} \\
& PCMCI-Gated  & \textbf{459}\tiny{$\pm$24} & \textbf{192}\tiny{$\pm$22} & \textbf{48}\tiny{$\pm$22} & \textbf{50}\tiny{$\pm$29} \\
\midrule
\multirow{2}{*}{70}
& PCMCI        & 1246\tiny{$\pm$33} & 1233\tiny{$\pm$34} & 1241\tiny{$\pm$24} & 1210\tiny{$\pm$14} \\
& PCMCI-Gated  & \textbf{360}\tiny{$\pm$34} & \textbf{88}\tiny{$\pm$36} & \textbf{78}\tiny{$\pm$33} & \textbf{83}\tiny{$\pm$38} \\
\midrule
\multirow{2}{*}{100}
& PCMCI        & 2575\tiny{$\pm$41} & 2530\tiny{$\pm$52} & 2490\tiny{$\pm$40} & 2478\tiny{$\pm$44} \\
& PCMCI-Gated  & \textbf{145}\tiny{$\pm$70} & \textbf{149}\tiny{$\pm$90} & \textbf{150}\tiny{$\pm$91} & \textbf{234}\tiny{$\pm$190} \\
\bottomrule
\end{tabular}
\end{table}

Tables~\ref{tab:pcmci_gated_f1} and~\ref{tab:pcmci_gated_shd} show that gated refinement consistently improves PCMCI across all dimensionalities.
At $d=50$/$T=1000$, F1 improves from 0.236 to 0.758 ($3.2\times$) and SHD drops from 627 to 48 ($13\times$); at $d=100$/$T=1000$, F1 improves from 0.117 to 0.602 ($5.1\times$) and SHD from 2490 to 150 ($17\times$).
Table~\ref{tab:pcmci_gated_prec} reveals the mechanism: gating converts PCMCI's extreme recall/low-precision profile (e.g., 97\% TPR / 6\% precision at $d=100$) into a more balanced one (59\% TPR / 66\% precision).

Comparing with CDNOTS-Gated (Table~\ref{tab:f1_main}), PCMCI-Gated achieves lower F1 at all configurations---e.g., 0.602 vs.\ 0.920 at $d=100$/$T=1000$---because PCMCI's skeleton has far more false positives than CDNOTS's.
This confirms that \grace{} delivers consistent and substantial gains regardless of the skeleton source---up to $5\times$ F1 improvement---while the final performance ceiling is set by the skeleton's recall.

\section{Score-Based Skeleton: DYNOTEARS}
\label{app:dynotears_gated}

The preceding appendix demonstrated that \grace{} improves CI-based skeletons (PCMCI).
Here we test whether the same gating mechanism can refine a \emph{score-based} skeleton produced by DYNOTEARS~\citep{pamfil2020dynotears}, which fits an $L_1$-penalized vector autoregression (LASSO VAR) per target variable using cross-validated penalty selection.
LASSO VAR produces continuous coefficient magnitudes, and the central challenge is converting them to a binary graph: setting the threshold too low retains many false positives, while setting it too high discards true edges.
This is precisely the thresholding problem that \grace{}'s Hard Concrete gates solve automatically.

\paragraph{Oracle threshold analysis.}
To quantify the thresholding difficulty, we sweep all possible coefficient thresholds on each seed and record the value that maximizes F1 given the ground truth (the ``oracle'' threshold).
Across all configurations ($d \in \{20, 50, 100\}$, $T \in \{500, 1000\}$, 10 seeds each), the oracle threshold clusters in the range 0.04--0.09 but varies up to $5\times$ across seeds (e.g., 0.038--0.188 at $d=20$).
A fixed threshold of $t=0.05$ recovers ${\sim}97\%$ of the oracle F1, making it a reasonable lower bound---but selecting it required access to the ground truth, which is unavailable in practice.

\paragraph{Results.}
Table~\ref{tab:dynotears_gated} compares four configurations: DYNOTEARS with the default threshold ($t=0$, all nonzero coefficients), DYNOTEARS with the post-hoc oracle-informed threshold ($t=0.05$), DYNOTEARS-Gated (GRACE applied to the $t=0.05$ skeleton), and CDNOTS-Gated (the default GRACE pipeline).

\begin{table}[ht]
\centering
\caption{F1 scores for DYNOTEARS variants and CDNOTS-Gated on sparse SCP benchmarks (mean $\pm$ std over 10 seeds).
$\dagger$\,Post-hoc threshold selected via oracle analysis; unavailable in practice.}
\label{tab:dynotears_gated}
\small
\begin{tabular}{l cccccc}
\toprule
& \multicolumn{2}{c}{$d=20$} & \multicolumn{2}{c}{$d=50$} & \multicolumn{2}{c}{$d=100$} \\
\cmidrule(lr){2-3} \cmidrule(lr){4-5} \cmidrule(lr){6-7}
\textbf{Method} & $T=500$ & $T=1000$ & $T=500$ & $T=1000$ & $T=500$ & $T=1000$ \\
\midrule
DYNOTEARS ($t=0$)              & .238\tiny{$\pm$.04} & .222\tiny{$\pm$.03} & .168\tiny{$\pm$.03} & .150\tiny{$\pm$.03} & .149\tiny{$\pm$.03} & .129\tiny{$\pm$.03} \\
DYNOTEARS ($t=0.05$)$^\dagger$ & .861\tiny{$\pm$.05} & .892\tiny{$\pm$.05} & .843\tiny{$\pm$.05} & .911\tiny{$\pm$.04} & .822\tiny{$\pm$.03} & .902\tiny{$\pm$.03} \\
DYNO-Gated ($t=0.05$)$^\dagger$ & .871\tiny{$\pm$.04} & .907\tiny{$\pm$.04} & .866\tiny{$\pm$.05} & \textbf{.924}\tiny{$\pm$.03} & .824\tiny{$\pm$.04} & .908\tiny{$\pm$.03} \\
\textbf{CDNOTS-Gated}           & .833\tiny{$\pm$.04} & .862\tiny{$\pm$.04} & .825\tiny{$\pm$.08} & .897\tiny{$\pm$.04} & \textbf{.888}\tiny{$\pm$.04} & \textbf{.907}\tiny{$\pm$.03} \\
\bottomrule
\end{tabular}
\end{table}

Several patterns emerge.
First, DYNOTEARS with the default threshold ($t=0$) achieves F1 of only 0.13--0.24, confirming that raw LASSO coefficients without thresholding are unusable as a causal graph---the vast majority of nonzero coefficients are false positives.
Second, the oracle threshold $t=0.05^\dagger$ dramatically improves DYNOTEARS to F1 0.82--0.91, demonstrating that a good threshold \emph{exists} but requires ground truth to find.
Third, DYNOTEARS-Gated ($t=0.05^\dagger$) matches or slightly exceeds CDNOTS-Gated at $d \leq 50$, showing that \grace{} can effectively refine score-based skeletons when the skeleton quality is sufficient.
Fourth, CDNOTS-Gated retains its advantage at $d=100$/$T=500$ (0.888 vs.\ 0.824), where the LASSO skeleton becomes sparser and misses more true edges.

The key takeaway is that \grace{}'s Hard Concrete gates provide the same benefit to score-based skeletons as to CI-based ones: automatic, threshold-free conversion of continuous edge scores into a binary causal graph.
However, the DYNOTEARS-Gated results rely on a post-hoc threshold ($\dagger$) to pre-filter the skeleton before gating.
By contrast, CDNOTS-Gated requires no such threshold---the CI-based skeleton is already binary---making it the preferred default in practice.
This experiment confirms that \grace{} is compatible with diverse skeleton sources: it delivers consistent improvements over skeletons from both CI-based methods (CDNOTS, PCMCI) and score-based methods (DYNOTEARS), with the gating mechanism providing the same threshold-free binary decision benefit across all skeleton types.

\section{Lorenz-96 Benchmark: Multiplicative Interactions}
\label{app:lorenz96}

The Lorenz-96 model~\citep{lorenz1996predictability} is a chaotic dynamical system defined by the ODE
$\dot{x}_i = (x_{i+1} - x_{i-2}) \cdot x_{i-1} - x_i + F$,
where indices wrap cyclically.
Each variable $x_i$ has four causal parents at lag~1: $x_{i-2}$, $x_{i-1}$, $x_i$ (self), and $x_{i+1}$.
Critically, the mechanism is \emph{multiplicative}---$x_{i-1}$ multiplies $(x_{i+1} - x_{i-2})$---creating interaction effects that violate GRACE's additive parent assumption.
This benchmark directly tests whether GRACE remains useful when its structural assumptions are misspecified.

We use the same implementation parameters as CUTS+~\citep{cheng2023cuts_plus}: forcing $F=10$ (chaotic regime), integration time step $\Delta t=0.1$ via \texttt{scipy.integrate.odeint}, additive Gaussian observation noise $\sigma=0.1$, and burn-in of 1000~steps.
The observed time series are thus noisy discretizations of the underlying chaotic ODE, making the causal discovery task more realistic and challenging.

\begin{table}[ht]
\centering
\caption{F1 on Lorenz-96 with noisy observations ($\sigma=0.1$), mean $\pm$ std over 10 seeds.
GRACE substantially improves over the CDNOTS skeleton at higher $N$, demonstrating that gated refinement remains effective even when the true causal mechanism is multiplicative.}
\label{tab:lorenz96_f1}
\small
\begin{tabular}{l l ccc}
\toprule
$N$ & \textbf{Method} & $T=500$ & $T=1000$ & $T=2000$ \\
\midrule
\multirow{2}{*}{50}
& CDNOTS  & .757\tiny{$\pm$.01} & .748\tiny{$\pm$.01} & .760\tiny{$\pm$.01} \\
& GRACE   & \textbf{.764}\tiny{$\pm$.01} & \textbf{.828}\tiny{$\pm$.01} & \textbf{.874}\tiny{$\pm$.01} \\
\midrule
\multirow{2}{*}{100}
& CDNOTS  & .724\tiny{$\pm$.01} & .717\tiny{$\pm$.01} & .723\tiny{$\pm$.01} \\
& GRACE   & \textbf{.840}\tiny{$\pm$.01} & \textbf{.875}\tiny{$\pm$.00} & \textbf{.894}\tiny{$\pm$.01} \\
\bottomrule
\end{tabular}
\end{table}

\begin{table}[ht]
\centering
\caption{Precision and recall on Lorenz-96 at $T=1000$ (mean over 10 seeds).
GRACE preserves the skeleton's recall while dramatically improving precision, achieving perfect precision at $N=100$.}
\label{tab:lorenz96_prec}
\small
\begin{tabular}{l cc cc}
\toprule
& \multicolumn{2}{c}{$N=50$} & \multicolumn{2}{c}{$N=100$} \\
\cmidrule(lr){2-3} \cmidrule(lr){4-5}
\textbf{Method} & Prec & Rec & Prec & Rec \\
\midrule
CDNOTS  & .717 & .782 & .665 & .778 \\
GRACE   & \textbf{.879} & .782 & \textbf{1.000} & .778 \\
\bottomrule
\end{tabular}
\end{table}

Tables~\ref{tab:lorenz96_f1} and~\ref{tab:lorenz96_prec} show that GRACE substantially improves over CDNOTS on Lorenz-96 at higher dimensionality.
At $N=100$/$T=1000$, GRACE achieves F1\,$=$\,0.875 with perfect precision (1.000) compared to CDNOTS F1\,$=$\,0.717 with precision 0.665---a pattern consistent with the SCP results in Section~\ref{sec:experiments}.
Both methods share the same recall (${\sim}0.78$), confirming that GRACE prunes false positives from the skeleton without removing true edges.
The improvement grows with both $N$ and $T$: at $N=50$/$T=500$ the gap is small (F1 0.764 vs.\ 0.757), but at $N=100$/$T=2000$ it is substantial (0.894 vs.\ 0.723).

These results demonstrate that GRACE's additive MLP decoders can effectively distinguish true from spurious edges even when the underlying causal mechanism is multiplicative.
While the model cannot perfectly represent the true functional form $x_{i-1} \cdot (x_{i+1} - x_{i-2})$, it learns sufficient signal from true parent variables to keep their gates open, while spurious edges provide no predictive value and are pruned.
The observation noise ($\sigma=0.1$) further validates robustness: the method operates on noisy discretizations of the chaotic ODE, not clean trajectories.

\section{Diverse Graph Topologies}
\label{app:topology}

The main experiments (Section~\ref{sec:experiments}) use sparse random SCPs where edges are sampled uniformly at random.
A natural concern is whether \grace{} generalizes to fundamentally different graph structures.
We test on three standard graph families generated via \texttt{NetworkX}~\citep{hagberg2008exploring}:
\begin{itemize}
\item \textbf{Scale-free} (Barab\'asi--Albert, $m=2$): hub-and-spoke topology with power-law degree distribution, characteristic of gene regulatory networks and metabolic pathways where a few master regulators influence many downstream targets~\citep{barabasi1999emergence}.
\item \textbf{Erd\H{o}s--R\'enyi} ($p=0.08$, directed): uniformly random edges, producing a denser graph (${\sim}200$ cross-edges at $d=50$) with no preferential structure.
\item \textbf{Small-world} (Watts--Strogatz, $k=4$, $p=0.3$): clustered ring lattice with random rewiring, combining high local clustering with short path lengths, as observed in neural connectomes and social networks~\citep{watts1998collective}.
\end{itemize}
For undirected generators (BA, WS), edges are directed from lower to higher index to establish a causal ordering.
Lags are drawn uniformly from $\{1, \ldots, 5\}$, and causal mechanisms use the same coefficient ranges and mixed linear/nonlinear function pool as the SCP benchmarks.
Autoregressive terms and spectral radius checking ensure stationarity.

We evaluate at $d=50$, $T=1000$ (8--10 seeds per configuration) using pair-level AUROC, which is threshold-free and enables a fair comparison between \grace{}'s continuous gate values and CUTS+'s Gumbel-Softmax probabilities.

\begin{table}[ht]
\centering
\caption{Pair-level AUROC on diverse graph topologies ($d=50$, $T=1000$, mean $\pm$ std).
\grace{} leads on Erd\H{o}s--R\'enyi and small-world graphs; CUTS+ edges ahead on scale-free.}
\label{tab:topology}
\small
\begin{tabular}{l cc}
\toprule
\textbf{Topology} & \textbf{\grace{}} & \textbf{CUTS+} \\
\midrule
Scale-free      & .902\tiny{$\pm$.05} & \textbf{.932}\tiny{$\pm$.02} \\
Erd\H{o}s--R\'enyi & \textbf{.839}\tiny{$\pm$.02} & .803\tiny{$\pm$.02} \\
Small-world     & \textbf{.976}\tiny{$\pm$.01} & .966\tiny{$\pm$.02} \\
\bottomrule
\end{tabular}
\end{table}

Table~\ref{tab:topology} shows that \grace{} generalizes well across all three topologies.
On small-world and Erd\H{o}s--R\'enyi graphs \grace{} leads in AUROC (0.976 and 0.839 vs.\ 0.966 and 0.803).
Scale-free is the one topology where CUTS+ achieves slightly higher AUROC (0.932 vs.\ 0.902), possibly because the hub-and-spoke structure suits CUTS+'s GRU message-passing architecture.
The method maintains its advantage across fundamentally different graph structures---including the denser Erd\H{o}s--R\'enyi graphs (edge density ${\sim}4\times$ higher than SCP)---demonstrating that gated refinement generalizes beyond the sparse random SCPs used in the main experiments.

\section{Elbe River Network: Graph Comparison}
\label{app:elbe_compare}

Figure~\ref{fig:elbe_compare} visualizes the causal graphs recovered on the Elbe River's main branch (Section~\ref{sec:causalrivers}).
The CDNOTS skeleton (center) recovers all 11 true edges but adds 106 spurious connections, making the graph uninterpretable.
\grace{}-Bootstrap (right) prunes nearly all false positives while retaining the chain structure.

\begin{figure}[ht]
\centering
\includegraphics[width=\textwidth]{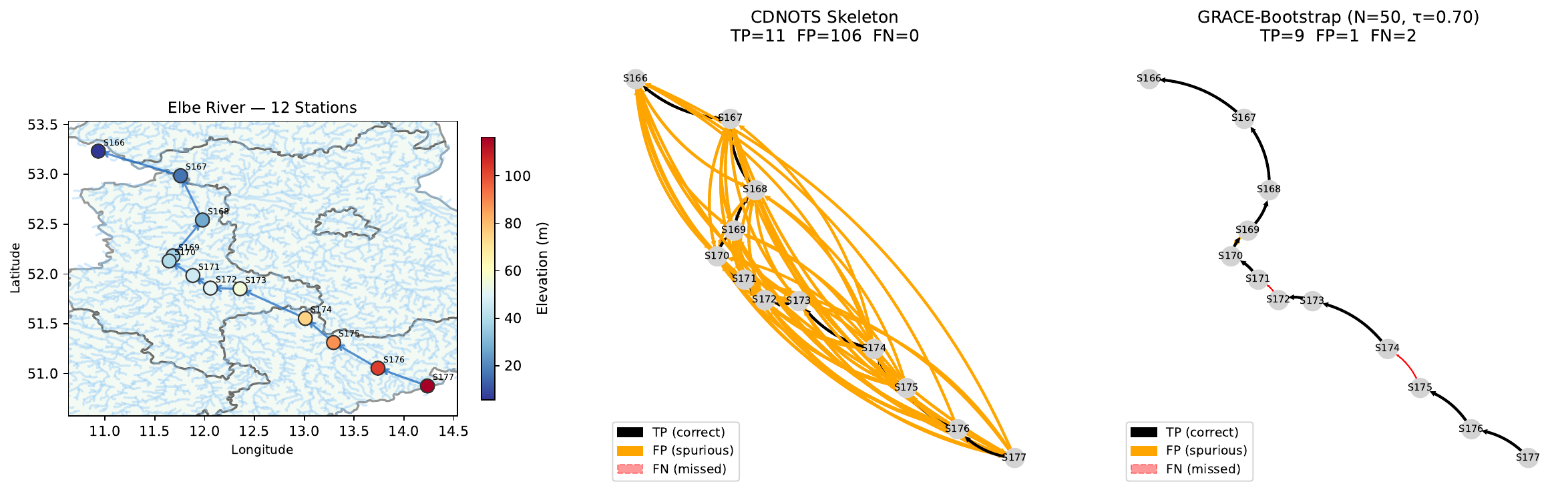}
\caption{Elbe River's main branch ($d=12$)~\citep{stein2025causalrivers}. \textbf{Left:} geographic map of 12 gauging stations colored by elevation, with arrows indicating true flow direction (upstream $\to$ downstream). \textbf{Center:} CDNOTS skeleton---all 11 true edges recovered (black) but buried under 106 false positives (orange). \textbf{Right:} \grace{}-Bootstrap ($N=50$, $\tau=0.70$)---9 true edges retained with only 1 false positive, recovering the river's chain structure.}
\label{fig:elbe_compare}
\end{figure}

\end{document}